\def\eqref#1{equation~\ref{#1}}
\def\1{\bm{1}}
\DeclareMathAlphabet{\mathsfit}{\encodingdefault}{\sfdefault}{m}{sl}
\SetMathAlphabet{\mathsfit}{bold}{\encodingdefault}{\sfdefault}{bx}{n}
\newcommand{\E}{\mathbb{E}}
\newcommand{\R}{\mathbb{R}}
\DeclareMathOperator*{\argmax}{arg\,max}
\newtheorem{Lemma}{Lemma}
\newtheorem{theorem}{Theorem}
\newtheorem{definition}{Definition}
\newtheorem{remark}{Remark}
\newtheorem{assumption}{Assumption}
\renewcommand{\P}{\mathbb{P}}
\title{Batched Nonparametric Bandits via k-Nearest Neighbor UCB}
\author{\name Sakshi Arya \email sxa1351@case.edu \\
      \addr Department of Mathematics, Applied Mathematics, and Statistics\\
      Case Western Reserve University}
\begin{document}

\maketitle

\begin{abstract}
 We study sequential decision-making in batched nonparametric contextual bandits, where actions are selected over a finite horizon divided into a small number of batches. Motivated by constraints in domains such as medicine and marketing, where online feedback is limited, we propose a nonparametric algorithm that combines adaptive k-nearest neighbor (k-NN) regression with the upper confidence bound (UCB) principle. Our method, \texttt{BaNk-UCB}, is fully nonparametric, adapts to the context density, and is simple to implement. Unlike prior works relying on parametric or binning-based estimators, \texttt{BaNk-UCB} uses local geometry of the contexts to estimate rewards and adaptively balances exploration and exploitation. We provide near-optimal regret guarantees under standard Lipschitz smoothness and margin assumptions, using a theoretically motivated batch schedule that balances regret across batches and achieves minimax-optimal rates. Empirical evaluations on synthetic and real-world datasets demonstrate that \texttt{BaNk-UCB} consistently outperforms binning-based baselines.
\end{abstract}

\section{Introduction}\label{sec: 1_intro}

Many real-world decision-making problems involve using feedback from past interactions to improve future outcomes, a hallmark of adaptive sequential learning.  Contextual bandits are a standard framework for modeling these problems, especially in personalized decision-making, where side information helps tailor actions to individuals  \citep{tewari2017ads, li2010contextual}. In this framework, a learner observes a context, selects an action, and receives a reward, aiming to maximize cumulative reward over time through adaptive policy updates.\\
However, in many practical applications such as clinical trials \citep{kim2011battle, lai1983sequential} and marketing campaigns \citep{schwartz2017customer, mao2018batched}, adaptivity is limited due to logistical or cost constraints. Decisions are made in batches, and feedback is only received at the end of each batch. This structure permits limited adaptation and renders traditional online bandit algorithms ineffective, motivating new methods tailored for low-adaptivity regimes with few batches.\\
While parametric bandits have been extended to the batched setting, they often rely on strong modeling assumptions. Nonparametric models offer greater flexibility and robustness \citep{rigollet2010nonparametric, qian2016kernel, kNN_bandits, zhou2020neural},  but their use in batched bandits remains limited.  Existing nonparametric batched bandit methods, such as \texttt{BaSEDB} \citep{jiang2025batched}, rely on partitioning the context space into bins and treating each bin as a local static bandit instance. While binning-based approaches are effective under structured or uniformly distributed contexts, they rely on fixed spatial partitions that may not adapt well to local variations in context density or geometry. In particular, low-density regions may receive few samples, leading to poor reward estimation and imbalanced exploration across the space. These limitations highlight the need for methods that adapt to the local geometry and data distribution, rather than imposing a fixed spatial discretization, especially in a data-limited setting such as that of batched bandits.\\
To address this gap, we propose Batched Nonparametric k-nearest neighbor-Upper Confidence Bound (\texttt{BaNk-UCB}), a nonparametric algorithm for batched contextual bandits that combines adaptive $k$-nearest neighbor regression with UCB-based exploration. \texttt{BaNk-UCB} adapts neighborhood radii to local data density, eliminating the need for manual bin design. Our  method adapts neighborhood sizes based on the observed data distribution, allowing for more flexible and data-driven reward estimation, particularly useful in high-dimensional or heterogeneous settings, even when the global context density is uniformly lower bounded.  Under Lipschitz continuity and margin conditions, we prove minimax-optimal regret rates up to logarithmic factors. Empirical results on synthetic and real data show consistent improvements over binning-based methods.
Our main contributions are:
  \setlength{\parskip}{0pt}
\begin{itemize}
 \setlength{\itemsep}{2pt}   
  \setlength{\parskip}{0pt}   
  \item We propose \textbf{\texttt{BaNk-UCB}}, a novel nonparametric algorithm for batched contextual bandits that integrates adaptive $k$-nearest neighbor (k-NN) regression with upper confidence bound (UCB) exploration. The method is simple to implement and avoids biases introduced by coarse partitioning of the context space.
  \item We design a theoretically grounded batch schedule and establish \emph{minimax-optimal regret bound} under standard Lipschitz smoothness and margin conditions. This is, to our knowledge, the first such result for a k-NN-based reward function estimation method in the batched non-parametric setting.
  \item We highlight how \texttt{BaNk-UCB} automatically adapts to the local geometry of the context distribution without requiring explicit modeling assumption, due to the adaptive neighborhood choice in $k$-NN regression.
   \item We demonstrate through extensive experiments on both synthetic and real-world datasets that \texttt{BaNk-UCB} consistently outperforms binning-based baselines, particularly in high-dimensional or heterogeneous contexts.
\end{itemize}

\subsection{Related Work} \label{sec: related_work}
Batched contextual bandits have received growing attention due to their relevance in settings with limited adaptivity, such as clinical trials and campaign-based interventions \citep{perchet_batched2016, gao2019batched}. Prior work has explored both non-contextual bandits with fixed or adaptive batch schedules \citep{Esfandiari_Karbasi_Mehrabian_Mirrokni_2021, batched_thompson_sampling, jin2021almost}, and contextual bandits, often under parametric assumptions. In particular, linear \citep{han2020sequential} and generalized linear models \citep{Ren_etal_glm_batchedbandit2022} have been popular due to their analytical tractability, though such models may fail to generalize when the reward function is nonlinear or misspecified.

Nonparametric bandits have been extensively studied in the fully sequential setting. Early work by \citet{yang2002randomized} employed $\epsilon$-greedy strategies with nonparametric reward estimation. Subsequent methods include the Adaptively Binned Successive Elimination (ABSE) algorithm \citep{rigollet2010nonparametric, perchet2013multi}, which partitions the context space adaptively and uses elimination-based strategies \citep{even2006action}. Other approaches include kernel regression methods \citep{qian2016kernel, hu2020smooth}, nearest neighbor algorithms \citep{kNN_bandits,guan2018nonparametric, zhao2024contextual}, and Gaussian process or kernelized models \citep{krause2011contextual, valko2013finite, arya2023kernel}.

In the batched nonparametric setting, \citet{jiang2025batched} introduced \texttt{BaSEDB}, a batched variant of ABSE with dynamic binning and minimax-optimal regret guarantees. Other recent directions include neural network-based estimators \citep{Gu_batchedNeuralBandits2024}, Lipschitz-constrained models \citep{feng2022lipschitz}, and semi-parametric frameworks \citep{arya2025semi}, though each makes different structural assumptions.

Our work departs from these approaches by employing adaptive $k$-nearest neighbor regression to estimate both reward functions and confidence bounds under batch constraints. Unlike binning-based methods which bin the context space into bins of equal width at each batch, \texttt{BaNk-UCB} avoids discretization and instead adapts to the local geometry of the context distribution through data-driven neighborhood selection. To our knowledge, this is the first batched nonparametric algorithm based on locally adaptive method like $k$-NN to achieve near-optimal regret guarantees. Empirically, we show that \texttt{BaNk-UCB} outperforms the baseline \texttt{BaSEDB} across different scenarios, leveraging the well-known ability of $k$-NN to adapt to local geometry of the context space \citep{kpotufe2011k}.

\section{Setup} \label{sec: setup}
We consider a batched contextual bandit problem over a finite time horizon  $T$, where decisions are grouped into $M$ batches to reflect limited adaptivity.
At each round $ t \in \{1, \dots, T\} $, a context $ X_t \in \mathcal{X} \subset \mathbb{R}^d $ is observed, and the learner selects an action $ a_t \in \mathcal{A} = \{1, \dots, K\} $.
 The learner selects an action $a_t \in \mathcal{A}$ based on $X_t$ and receives a noisy reward:
\begin{align}
    Y_t = f_{a_t}(X_t) + \epsilon_t,
\end{align}
where $f_a(x)$ is  an unknown mean reward function for $a \in \mathcal{A}$ and $x \in \mathcal{X}$. The model noise is given by $\epsilon_t$. We make the following assumptions on the noise and context space.
\begin{assumption}[Sub-Gaussian noise]\label{assum: subGaussian}
We assume that the noise terms \( \{\epsilon_t\}_{t=1}^T \) are independent and \( \sigma^2 \)-sub-Gaussian; that is, for all \( \lambda \in \mathbb{R} \) and all \( t \),
\begin{align}
\mathbb{E}\left[e^{\lambda \epsilon_t}\right] \leq e^{\frac{1}{2} \lambda^2 \sigma^2}.
\end{align}
\end{assumption}
\begin{assumption}[Bounded context density]\label{assum: boundedsupport}
The context vectors $X_t$ are drawn i.i.d.\ from a distribution with density $p_X$, which is supported on $\mathcal{X} \subset \mathbb{R}^d$. 
We assume that $p_X(x) \geq \underbar{c}$ for some $\underbar{c} > 0$.
\end{assumption}

Unlike many existing nonparametric bandit algorithms, such as \texttt{ABSE}~\citep{perchet2013multi} and its batched variant \texttt{BaSEDB}~\citep{jiang2025batched}, which rely on uniform binning of the context space (typically assuming a hypercube domain such as $[0,1]^d$), our proposed method accommodates \emph{arbitrary bounded domains} $\mathcal{X} \subset \mathbb{R}^d$ with densities bounded away from zero. For instance, $\mathcal{X}$ may be a spherical or manifold-shaped domain where uniform partitioning is either ill-defined or computationally inefficient. In contrast to binning-based methods that depend on rigid geometric structure to define partitions and control coverage, our $k$-NN based approach naturally adapts to the local data geometry, eliminating the need for explicit grid design and enabling applicability to more general, heterogeneous settings. This adaptivity is particularly crucial in \emph{data-limited regimes} such as batched bandits, where learning can only occur at a small number of decision points.

A \emph{policy} $\pi_t:\mathcal{X} \to \mathcal{A}$ for $t=1,\dots,T$ determines an action $a_t \in \mathcal{A}$ at $t$. Based on the chosen action $a_t$, a reward $Y_t$ is obtained. In the sequential setting without batch constraints, the policy $\pi_t$ can depend on all the observations $(X_s, Y_s)$ for $s<t$. In contrast, in a batched setting with $M$ batches, where $0=t_0 < t_1 <\dots<t_{M-1}<t_M = T$, for $t \in [t_i, t_{i+1})$, the policy $\pi_t$ can depend on observations from the previous batches, but not on any observations within the same batch. In other words, policy updates can occur only at the predetermined batch boundaries $t_1,\dots,t_M$. This reflects the constraint that feedback is only revealed at the end of each batch.

Let $\mathcal{G} = \{t_0, t_1,\dots,t_M\}$ represent a partition of time $\{0,1,\dots,T\}$ into $M$ intervals, and $\pi = (\pi_t)_{t=1}^T$ be the sequence of policies applied at each time step.  The overarching objective of the decision-maker is to devise an $M$-batch policy $(\mathcal{G}, \pi)$ that minimizes the expected \textit{cumulative regret}, defined as $\mathcal{R}_T(\pi) = E[R_T(\pi)]$, where
\begin{equation}
	R_T(\pi) = \sum_{t=1}^T f_*(X_t) - f_{(\pi_t(X_t))}(X_t)  \label{eq: regret_def}
\end{equation}
where $f_*(x) = \max_{a\in\mathcal{A}} f_a(x)$ is the expected reward from the optimal choice of arms given a context $x$. 
The cumulative regret serves as a pivotal metric, quantifying the difference between the cumulative reward attained by $\pi$ and that achieved by an optimal policy, assuming perfect foreknowledge of the optimal action at each time step.

We make the following  assumptions on the reward functions.
\begin{assumption}[Lipschitz Smoothness]\label{assum: Smoothness}
We assume that the link function $f_a:\R^d \to \R$ for each arm is Lipschitz smooth, that is, there exists $L > 0$ such that for $a \in \mathcal{A}$,
\begin{align*}
    |f_a(x) - f_a(x^\prime)| \leq L \|x - x^\prime\|,
\end{align*}
holds for $x,x'\in \mathcal{X}$.
\end{assumption}

\begin{assumption}[Margin]\label{assum: Margin}
For some $0<\alpha \leq d$ and for all $a \in \mathcal{A}$, there exists a $\delta_0 \in (0,1)$ and $D_\alpha > 0$ such that
    \begin{align*}
        \P_X(0 < f_*(X) - f_a(X) \leq \delta) \leq D_\alpha \delta^\alpha,
    \end{align*}
holds for all $\delta \in [0,\delta_0]$.
\end{assumption}
The margin condition implies that the regions where the reward gap is small, i.e., where it is hard to distinguish the best arm are not too large. The exponent $\alpha$ controls the rate at which the measure of such regions shrinks as $\delta \to 0$. When $\alpha$ is small, suboptimal arms can be frequently indistinguishable from the best arm, leading to slower learning; larger $\alpha$ implies faster decay and enables faster convergence. 
\begin{remark}
Throughout this paper, we assume that $\alpha \leq 1$, because in the $\alpha  > 1$ regime, the context information becomes irrelevant as one arm dominates the other (e.g., see Proposition 2.1 of \cite{rigollet2010nonparametric}). 
 \end{remark}
 The margin condition plays a crucial role in determining the minimax rate of regret in nonparametric bandit problems, similar to its role in classification \citep{mammen1999margin, tsybakov2004optimal}.
\paragraph{Notation: } We use $\| \cdot \|$ to denote the Euclidean norm in $\mathbb{R}^d$. We denote $B(x,r)$ to denote a Euclidean ball with center $x \in \mathbb{R}^d$ and radius $r$. We denote $\lesssim$ and $\gtrsim$ to denote inequalities upto constants. The notation $f(n) = \Theta(g(n))$ indicates an asymptotic tight bound. Formally, there exist positive constants $c_1, c_2$ and $n_0$ such that for all $n \geq n_0$, $
c_1 \cdot g(n) \leq f(n) \leq c_2 \cdot g(n).
$ The notation $\tilde{O}(g(n))$ denotes an asymptotic upper bound up to logarithmic factors. For $a,b \in \mathbb{R}$, $a \vee b$ denotes the maximum of $a$ and $b$, and $a \wedge b$ denotes minimum of $a$ and $b$. For any batch $m$, let $\mathcal{F}_{t_m}$ be the filtration encoding the history up to batch $m$. 
\section{Batched Nonparametric k-Nearest Neighbor-UCB (\texttt{BaNk-UCB}) Algorithm} \label{sec: algorithm}
Recall that in the batched bandits setting, the decision at time $t$ in batch $m$ only depends on the information observed up to the end of the $(m-1)^\text{th}$ batch. 
We propose \texttt{BaNk-UCB} (\textbf{B}atched \textbf{N}onparametric $\boldsymbol{k}$-Nearest Neighbors \textbf{U}pper \textbf{C}onfidence \textbf{B}ound), described in Algorithm~\ref{alg: adaptivekNNUCB}. The algorithm is based on an \emph{adaptive $k$-nearest neighbor} policy that tunes the neighborhood size~\(k\) based on the local sub-optimality gap (margin) and context density. This approach extends the adaptive $k$-NN UCB algorithm of \citet{zhao2024contextual} for the online setting to the batched nonparametric bandit setting.
  Let us first define some useful notation. For $x \in \mathcal{X}$ and some fixed $k \leq t_{m-1}$, let $N_{t_{m-1},k}(x,a)$ be the set of $k$ nearest neighbors of $x$ where arm $a$ was chosen, i.e., \begin{equation} \label{def:N_tm_k}
N_{t_{m-1},k}(x,a):=
\{s\le t_{m-1}:a_s=a\text{ and }X_s\text{ is among the }k\text{ nearest to }x\}.
\end{equation} For simplicity, we denote $N_{t,k}(x, a) \equiv N_{t_{m-1},k}(x, a)$ for all times $t$ within the batch interval $(t_{m-1}, t_m]$. Then we  define for $t \in (t_{m-1}, t_m]$, 
\begin{align}\label{eq: k_NNdist}
    d_{a,t,k}(x) = \max_{s \in N_{t_{m-1},k}(x,a)} \|X_s - x\|,
\end{align}
to be the radius of the $k$-NN ball around~\(x\) for arm~\(a\). We adaptively select the number of neighbors, denoted $k_{a,t}(x)$, based solely on observations available up to the end of batch $(m-1)$ and specifically associated with arm $a$. This $k_{a,t}$ is then used in the proposed \texttt{BaNk-UCB} algorithm as described in Algorithm \ref{alg: adaptivekNNUCB}:
\begin{align} \label{eq: adaptive_k}
    k_{a,t}(x) = \max \left\{j \mid Ld_{a,t,j}(x) \leq \sqrt{\dfrac{\ln{t_{m-1}}}{j}} \right\}.
\end{align}
Note that, the left hand side $Ld_{a,t,j}(x)$ controls the bias in the estimation of $f_a$ and the right-hand side $\sqrt{\ln{t_{m-1}}/j}$ controls the variance in the estimation, i.e., it ensures that we use large $k$ if previous samples are relatively dense around $X_t$, and vice versa. The adaptive selection of $k$ in~\eqref{eq: adaptive_k} requires that the nearest observed context be sufficiently close. Specifically, we enforce $L d_{a,t,1}(X_t) \le \sqrt{\ln t_{m-1}}$; otherwise, reliable estimation is not feasible, and we conservatively set the UCB to infinity: $\hat{f}_{a,t}(x) = \infty$.  Otherwise, for $t \in (t_{m-1},t_m]$, we calculate the upper confidence bound (UCB) as follows:
\begin{align} \label{eq: kNN_estimator}
    \hat{f}_{a,t}(x) = \dfrac{1}{k_{a,t}(x)} \sum_{s \in N_{t_{m-1}}(x,a)} Y_s + \xi_{a,t}(x) + L d_{a,t,k}(x),
\end{align}
where $d_{a,t}$ is as defined in \eqref{eq: k_NNdist} and $\xi_{a,t}$ is defined as:
\begin{align} \label{eq: xi_errorbound}
    \xi_{a,t}(x) = \sqrt{\dfrac{2\sigma^2}{k_{a,t}(x)} \ln{(dt_{m-1}^{2d+3}|\mathcal{A}|)}}.
\end{align}
\begin{algorithm}[H]
\caption{BaNk-UCB for Batched Nonparametric Bandits}
\label{alg: adaptivekNNUCB}
\begin{algorithmic}[1]
\State Input: Partition $t_0, t_1, \hdots, t_M$,  with $t_0 = 0$ and $t_M = T$.
\For{$m = 1, \dots, M$}
\For{$t = t_{m-1}+1, \dots, t_m$}
    \State Receive context $X_t$;
    \For{$a \in \mathcal{A}$}
        \If{$L d_{a,t,1}(X_t) > \sqrt{\ln t_{m-1}}$}
            \State Set $\hat{f}_{a,t}(X_t) \leftarrow +\infty$;
        \Else
            \State Compute $k_{a,t}(X_t)$ according to \eqref{eq: adaptive_k};
            \State Compute $\hat{f}_{a,t}(X_t)$ according to \eqref{eq: kNN_estimator};
        \EndIf
    \EndFor
    \State Choose action $a_t = \arg\max_{a \in \mathcal{A}} \hat{f}_{a,t}(X_t)$;
    \State Pull arm $a_t$;
\EndFor
\State Observe rewards $\{Y_t, t \in t_{m-1}+1,\hdots, t_m\}$;
\EndFor
\end{algorithmic}
\end{algorithm}
Here, $\xi_{a,t}(x)$ provides a high-probability bound for stochastic noise of the nearest-neighbor averaging, while $L d_{a,t}(x)$ controls the estimation bias from finite-sample approximation. Both terms depend explicitly on prior-batch data, highlighting the critical role batch design plays in balancing estimation accuracy and cumulative regret. Finally, the algorithm selects arm $a_t$ with the maximum UCB value,
\begin{align}\label{eq: arm_select_UCB}
    a_t = \argmax_{a \in \mathcal{A}} \hat{f}_{a,t}(X_t).
\end{align}
Note that in \eqref{eq: arm_select_UCB}, ties are broken arbitrarily at each time step $t$.

The adaptive choice of $k_{a,t}(x)$ in~\eqref{eq: adaptive_k} simultaneously balances the bias-variance and exploration-exploitation trade-offs in estimating $f_a$. Specifically, the bias-variance trade-off is managed by selecting a larger $k$ when previously observed contexts are densely sampled around $X_t$, thereby reducing variance, and choosing a smaller $k$ otherwise, controlling bias. Moreover, due to the Lipschitz smoothness assumption, contexts with larger optimality gaps ($f^*(x)-f_a(x)$) naturally correspond to larger radii $d_{a,t,j}(x)$, leading to smaller chosen values of $k$ and promoting targeted exploration in regions with high uncertainty. 

Note that, a key distinction from \citet{jiang2025batched} lies in how structural assumptions influence the algorithm. In their method, the design of the partition grid explicitly depends on the unknown margin parameter $\alpha$. \textit{In contrast, our adaptive choice of $k$ in \eqref{eq: adaptive_k} in the $k$-NN estimator does not require direct knowledge of $\alpha$, making our approach more robust to unknown margin condition.}



\section{Minimax Analysis on the Expected Regret} \label{sec: regret_results}
In this section, we demonstrate that the BaNk-UCB algorithm achieves a minimax optimal rate on the expected cumulative regret under an appropriately designed partition of grid points. Specifically, the rate matches known minimax lower bound up to logarithmic factors.
 First we describe the choice of the batch grid points and then state the upper and lower bounds on the expected regret. 
\subsection{Batch sizes} \label{sec: batch_details}
The choice of batch sizes plays a crucial role in the performance of the batched bandit algorithms. We partition the time horizon into \(M\) batches, denoted by grid points $\mathcal{G} = \{t_1,t_2,\hdots, t_M\}$, with \(t_0=0\). The special case \(M=T\) recovers the fully sequential bandit setting, where policy updates occur at every step. Conversely, smaller \(M\) imposes fewer policy updates, introducing a trade-off between computational/operational complexity and regret accumulation. A key challenge in the batched setting is selecting the grid $\mathcal{G}$. Intuitively, to minimize total regret, no single batch should dominate the cumulative error, suggesting that the grid should balance regret across batches. If one batch incurs higher regret, reassigning time steps can improve the overall rate. This motivates a grid choice that equalizes regret across batches, up to order in T and d, as we formalize below. We choose:
\begin{align} \label{eq: gridchoice}
    t_1 = ad, \ \ t_m = \lfloor a t_{m-1}^\gamma \rfloor,
\end{align}
where $\gamma = \frac{1+ \alpha}{2+d}$ and $a = \Theta(T^{\frac{1-\gamma}{1-\gamma^M}})$ is chosen so that $t_M = T$.
\subsection{Regret bounds}
In order to establish the regret rates, we first define the batch-wise expected sample density, motivated by the formulation of \citet{zhao2024contextual}. Let $p_{a}^{(m)}: \mathcal{X} \rightarrow \mathbb{R}$ is defined such that for all $A \subseteq \mathcal{X}$,
\begin{align}\label{eq: batchwise_expectedsampledensity}
    \E\left[\sum_{t=t_{m-1}}^{t_m} 1(X_t \in A, a_t = a) \right] = \int_A p_a^{(m)}(x) dx.
\end{align}
First let's consider the cumulative regret relate it to the batch-wise expected sample density.
\begin{Lemma} \label{lem: regret_breakdown_into_Ra}
    The expected cumulative regret in \eqref{eq: regret_def} is given by $R_T(\pi) = \sum_{a \in \mathcal{A}} \sum_{m=1}^M R_a^{(m)}(\pi)$, where $R_a^{(m)}(\pi)$ is defined as:
    \begin{align}\label{def:Ra}
        R_a^{(m)}(\pi) = \int_\mathcal{X} (f_*(x) - f_a(x))p_a^{(m)}(x) dx.
    \end{align}
\end{Lemma}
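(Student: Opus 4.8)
The plan is to begin from the definition of the expected cumulative regret $\mathcal{R}_T(\pi) = \E[R_T(\pi)]$ in \eqref{eq: regret_def} and carry out two successive decompositions---first over the $M$ batches and then over the $K$ arms---before invoking the definition of the batch-wise expected sample density to convert the resulting expectation into the claimed integral.

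First I would partition the horizon $\{1,\dots,T\}$ into the disjoint batch intervals $(t_{m-1},t_m]$ and use linearity of expectation to write
\[
\mathcal{R}_T(\pi) = \sum_{m=1}^{M} \E\left[ \sum_{t=t_{m-1}+1}^{t_m} ( f_*(X_t) - f_{\pi_t(X_t)}(X_t) ) \right].
\]
Since exactly one arm is pulled at each round, I would then insert the identity $\sum_{a\in\mathcal{A}} \mathbf{1}(a_t = a) = 1$ and rewrite the chosen-arm gap as $f_*(X_t) - f_{\pi_t(X_t)}(X_t) = \sum_{a\in\mathcal{A}} (f_*(X_t) - f_a(X_t))\,\mathbf{1}(a_t=a)$. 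Exchanging the two finite sums yields
\[
\mathcal{R}_T(\pi) = \sum_{a\in\mathcal{A}} \sum_{m=1}^{M} \E\left[ \sum_{t=t_{m-1}+1}^{t_m} ( f_*(X_t) - f_a(X_t) )\,\mathbf{1}(a_t=a) \right],
\]
so it remains only to identify each inner expectation with $R_a^{(m)}(\pi)$.

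This last identification is where the definition \eqref{eq: batchwise_expectedsampledensity} of $p_a^{(m)}$ enters. That definition states exactly the desired equality when the integrand $f_*-f_a$ is replaced by an indicator $\mathbf{1}_A$; I would extend it to the gap function $g(x) = f_*(x)-f_a(x)$ by the standard measure-theoretic argument: linearity gives the identity for simple functions, and since $g \geq 0$ (because $f_*(x) = \max_a f_a(x)$) monotone convergence lifts it to $g$ itself. Lipschitz smoothness (Assumption~\ref{assum: Smoothness}) together with boundedness of $\mathcal{X}$ makes $g$ bounded, so every quantity is finite and no integrability concern arises. Substituting $\int_\mathcal{X}(f_*(x)-f_a(x))\,p_a^{(m)}(x)\,dx = R_a^{(m)}(\pi)$ into the double sum gives the claim.

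The argument is essentially bookkeeping, so I do not anticipate a deep obstacle; the one step that deserves care is the passage from the indicator-based definition of $p_a^{(m)}$ to its action as an occupation density against the general (nonnegative, bounded) test function $g$, rather than only against indicator sets. A secondary, purely cosmetic point is the mild off-by-one mismatch between the half-open batch intervals $(t_{m-1},t_m]$ appearing in the regret sum and the closed summation range used in \eqref{eq: batchwise_expectedsampledensity}; this affects only boundary rounds and leaves the stated identity unchanged.
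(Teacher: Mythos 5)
Your proposal is correct and follows essentially the same route as the paper's own proof: decompose the expected regret over batches, insert the indicator $\mathbf{1}(a_t = a)$ to split over arms, and then identify each inner expectation with the integral $\int_{\mathcal{X}}(f_*(x)-f_a(x))\,p_a^{(m)}(x)\,dx$ via the definition of the batch-wise expected sample density. If anything, you are more careful than the paper at the two points you flag---the paper silently passes from the indicator-based definition of $p_a^{(m)}$ to integration against the gap function (your simple-function/monotone-convergence step), and it writes overlapping summation ranges $\sum_{t=t_{m-1}}^{t_m}$ at batch boundaries where your half-open intervals $(t_{m-1},t_m]$ are the correct disjoint partition.
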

\begin{proof}
Consider,
    \begin{align*}
    R_T(\pi) &= \E\left[\sum_{t=1}^T (f_*(X_t) - f_{a_t}(X_t))\right]\\
    &= \E\left[\sum_{m=1}^M \sum_{t=t_{m-1}}^{t_m} (f_*(X_t) - f_{a_t}(X_t))\right]\\
    &= \sum_{a \in \mathcal{A}} \sum_{m=1}^M \E\left[\sum_{t=t_{m-1}}^{t_m} (f_*(X_t) - f_{a_t}(X_t)) 1(a_t = a)\right]\\
    &= \sum_{a \in \mathcal{A}} \sum_{m=1}^M \int_{\mathcal{X}}\left(f_*(X_t) - f_{a_t}(X_t)\right) p_{a}^{(m)}(x) dx.
\end{align*}
\end{proof}
Using the fact that the batch sizes are chosen to control for the regret to be balanced across batches, the idea is to construct an upper bound on the batch-wise arm specific regret, $R_a^{(m)}(\pi)$. Then, using Lemma \ref{lem: regret_breakdown_into_Ra}, we can bound the expected cumulative regret. 
\begin{theorem} \label{thm: upperbound_BakNNUCB}
    Under Assumptions \ref{assum: subGaussian}--\ref{assum: Margin}, and with the batch sizes as defined in \eqref{eq: gridchoice} in Section \ref{sec: batch_details},  the regret of the proposed BaNk-UCB algorithm ($\pi$) is bounded by,
    \begin{align}
        R_T(\pi) \lesssim |\mathcal{A}| M T^{\frac{1-\gamma}{1-\gamma^M}}\left(\ln{T}\right)^{\gamma}, \label{eq: final_upperbound}
    \end{align}
    where $\gamma = \frac{1+\alpha}{2+d}$.
\end{theorem}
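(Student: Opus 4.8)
The plan is to bound each batch-wise arm-specific regret $R_a^{(m)}(\pi)$ separately and then sum via Lemma~\ref{lem: regret_breakdown_into_Ra}, exploiting the fact that the grid in \eqref{eq: gridchoice} is engineered to equalize these contributions up to order in $T$ and $d$. First I would fix an arm $a$ and batch $m$, and split the integral defining $R_a^{(m)}(\pi)$ in \eqref{def:Ra} according to the size of the suboptimality gap $\Delta_a(x) := f_*(x) - f_a(x)$. The key quantity is a threshold $\delta_m$ below which the margin condition (Assumption~\ref{assum: Margin}) controls the measure of the ``hard'' region, and above which the UCB rule guarantees that arm $a$ is essentially never pulled except on a low-probability event. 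Concretely, I expect to show that conditioned on a good event $\mathcal{E}_m$ on which all the confidence bounds $\hat f_{a,t}$ hold simultaneously (this is where $\xi_{a,t}$ in \eqref{eq: xi_errorbound}, with its $\ln(d t_{m-1}^{2d+3}|\mathcal{A}|)$ inflation, is calibrated to give a union bound over the $t_{m-1}$ possible neighborhoods, $d$ coordinates, and $|\mathcal{A}|$ arms), a suboptimal arm $a$ can only be selected at a context $x$ if $\Delta_a(x) \lesssim \delta_m$ for an appropriate estimation-error scale $\delta_m$.

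The heart of the argument is determining that error scale. The adaptive choice of $k$ in \eqref{eq: adaptive_k} balances bias $L d_{a,t,k}(x)$ against standard deviation $\sqrt{\ln t_{m-1}/k}$, so on the good event the total estimation error at $x$ is of order $\sqrt{\ln t_{m-1}/k_{a,t}(x)}$, and the defining equation for $k_{a,t}$ ties this to the local density. Using Assumption~\ref{assum: boundedsupport} ($p_X \ge \underbar{c}$), a ball of radius $r$ around $x$ contains on the order of $t_{m-1}\, r^d$ prior points, so solving $L r \asymp \sqrt{\ln t_{m-1}/(t_{m-1} r^d)}$ yields an effective radius $r_m \asymp (\ln t_{m-1}/t_{m-1})^{1/(2+d)}$ and hence an estimation scale $\delta_m \asymp (\ln t_{m-1}/t_{m-1})^{1/(2+d)}$. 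The margin condition then bounds the measure of $\{0 < \Delta_a \le \delta_m\}$ by $D_\alpha \delta_m^\alpha$, and combining this with the bound $\Delta_a(x)\lesssim \delta_m$ on the pulled region gives a per-batch regret of order (batch length)~$\times\ \delta_m^{1+\alpha}$. Since $p_a^{(m)}$ integrates to at most the batch length $t_m - t_{m-1}\lesssim t_m$, this produces
\begin{align*}
R_a^{(m)}(\pi) \ \lesssim\ t_m \left(\frac{\ln t_{m-1}}{t_{m-1}}\right)^{\frac{1+\alpha}{2+d}} + (\text{low-probability contribution}),
\end{align*}
where the exponent $\frac{1+\alpha}{2+d}$ is exactly $\gamma$.

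Next I would substitute the recursion $t_m = \lfloor a\, t_{m-1}^\gamma\rfloor$ from \eqref{eq: gridchoice}. The point of this choice is that $t_m / t_{m-1}^\gamma \asymp a$, so $R_a^{(m)}(\pi) \lesssim a\,(\ln t_{m-1})^{\gamma} \lesssim a\,(\ln T)^{\gamma}$ uniformly in $m$ — this is the promised regret balancing, as each batch contributes the same order. Summing over the $M$ batches and $|\mathcal{A}|$ arms via Lemma~\ref{lem: regret_breakdown_into_Ra} gives $R_T(\pi)\lesssim |\mathcal{A}|\,M\,a\,(\ln T)^\gamma$, and plugging in $a = \Theta(T^{(1-\gamma)/(1-\gamma^M)})$ yields \eqref{eq: final_upperbound}. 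The first batch must be handled separately, since there is no prior data to form neighborhoods; here one simply bounds its regret by its length $t_1 = ad$, which is absorbed into the stated bound, and the low-probability (complement of $\mathcal{E}_m$) contributions must be shown to be of smaller order by the union-bound calibration.

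The main obstacle I anticipate is making rigorous the passage from the deterministic defining inequality for $k_{a,t}(x)$ to the high-probability statement that the realized neighborhood radius $d_{a,t,k}(x)$ is $\Theta(r_m)$ with $k_{a,t}(x) \asymp t_{m-1} r_m^d$. This requires a uniform concentration argument showing that the empirical count of prior-batch, arm-$a$ points in balls around arbitrary query points $x$ tracks its expectation $p_a^{(m-1)}(x)\,|B(x,r)|$ — complicated by the fact that $p_a^{(m)}$ is itself endogenous (it depends on the policy's past pulls of arm $a$), so one cannot simply assume a lower bound on the local count from $\underbar{c}$ alone; one needs that arm $a$ has been pulled often enough in the relevant region, or else that $\Delta_a$ is provably large there and the UCB's $+\infty$ fallback drives exploration. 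Carefully controlling this interplay between the endogenous sampling density $p_a^{(m)}$, the adaptive radius, and the margin region — uniformly over $x$ and over the batch — is where the real work lies; the subsequent algebra with the grid recursion is routine by comparison.
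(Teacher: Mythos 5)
Your proposal reproduces the paper's outer scaffolding (the decomposition of Lemma~\ref{lem: regret_breakdown_into_Ra}, a gap-threshold split, a good event calibrated as in \eqref{eq: xi_errorbound}, and the final grid algebra), and your threshold $\delta_m \asymp (\ln t_{m-1}/t_{m-1})^{1/(2+d)}$ coincides with the paper's optimized $\epsilon$. But the central claim you build on --- that on $\mathcal{E}_m$ a suboptimal arm $a$ can only be selected at $x$ when $\Delta_a(x) \lesssim \delta_m$, with $\delta_m$ derived from the \emph{context} density via $p_X \ge \underbar{c}$ --- is false, and this is a structural gap, not the technical concentration issue you flag at the end. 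If arm $a$ is pulled at $x$, the UCB rule only yields $\Delta_a(x) \lesssim \xi_{a,t}(x) + L d_{a,t}(x)$, and this estimation error is governed by the local density of \emph{arm-$a$} samples, which is endogenous: for a suboptimal arm it is (and should be) far smaller than $t_{m-1} r^d$, precisely because the algorithm avoids pulling it. Consequently arm $a$ \emph{is} pulled at contexts with large gaps whenever local arm-$a$ data is sparse (including via the $+\infty$ fallback), and these pulls occur \emph{on} the good event --- they are a first-order regret contribution, not a ``low-probability contribution'' to be absorbed. No uniform concentration argument for the counts can rescue the claim, because the desired lower bound on arm-$a$ counts is exactly what a good policy must violate.

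The paper resolves this in the opposite direction: instead of arguing that large-gap pulls do not happen, it \emph{counts} them. Lemmas~\ref{lem: k_le_na} and~\ref{lem: n_xar_leq_k} combine (Lemma~\ref{lem: n_axr_leq_na}) to give the localized pull-count bound $n^{(m)}(x,a,r_a(x)) \le n_a^{(m)}(x) = C_1 \ln t_{m-1}/\Delta_a(x)^2$ --- a spatially localized version of the classical UCB ``each suboptimal pull shrinks uncertainty'' argument --- which is an \emph{upper} bound on the endogenous sampling density $p_a^{(m)}$ near $x$, valid regardless of how little arm-$a$ data exists. This bound is then integrated against the gap over the large-gap region using the auxiliary density $\phi$ in \eqref{eq: phi_def} and a ball-covering-in-expectation argument (Lemmas~\ref{lem: boundonmargin_g_epsilon}--\ref{lem: bound_on_Ram}), producing the term $\epsilon^{\alpha-d-1}\ln t_{m-1}$ in \eqref{eq: Rawithtwotermsfinal}. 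Note that at the optimal $\epsilon$ this large-gap term is of the \emph{same order} as the small-gap margin term $t_m\epsilon^{1+\alpha}$ --- the two are balanced, which is how $\epsilon$ is chosen --- so your accounting, which keeps only the margin term, omits half of the regret even though it happens to land on the correct rate. To repair your proof you would need to replace your step (4) with the localized pull-count machinery (or an equivalent), at which point you would essentially be reconstructing the paper's Lemmas~\ref{lem: k_le_na}--\ref{lem: bound_on_Ram}.
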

\begin{proof}[Proof Sketch for Theorem \ref{thm: upperbound_BakNNUCB}]
For $\epsilon >0$, we split $R_a^{(m)}$ into two terms:
\begin{align}
    R_a^{(m)} &= \int_\mathcal{X} (f_*(x) - f_a(x)) p_a^{(m)}(x) 1(f_*(x) - f_a(x) > \epsilon) dx \nonumber \\
    & \quad + \int_\mathcal{X} (f_*(x) - f_a(x)) p_a^{(m)}(x) 1(f_*(x) - f_a(x) \leq \epsilon) dx. \label{eq: R_a_breakdownbymargin}
\end{align}
The idea is to bound these two terms separately, where the second one can be bounded using the margin assumption (i.e., Assumption \ref{assum: Margin}). The $\epsilon$ is determined theoretically based on the bound on $R_a^{(m)}$.
    From Lemmas \ref{lem: boundonmargin_g_epsilon} and \ref{lem: bound_on_Ram} in the Appendix \ref{sec: lemmas_upper_bound}, we get that:
\begin{align}
    \int_{\mathcal{X}} \left(f^*(x) - f_a(x)\right)& p_a^{(m)}(x) 1\left(f^*(x) - f_a(x) > \epsilon\right) dx \lesssim \epsilon^{\alpha - d - 1} \ln{t_{m-1}} + t_{m} \epsilon^{1+\alpha}. \label{eq: Ramargin1}
\end{align}
Furthermore, we can bound the second term in \eqref{eq: R_a_breakdownbymargin} by
\begin{align}
    \int_{\mathcal{X}} \left(f^*(x) - f_a(x)\right)& p_a^{(m)}(x) 1\left(f^*(x) - f_a(x) \leq \epsilon\right) dx \nonumber\\
    &\stackrel{(\dagger)}{\leq} t_m \epsilon \int p_X(x) 1\left(f^*(x) - f_a(x) \leq \epsilon\right) dx\nonumber\\
    &\stackrel{(\ddagger)}{\lesssim} t_m \epsilon^{1+\alpha}, \label{eq: Ramargin2}
\end{align}
where ($\dagger$) follows from Lemma \ref{lem: batchewise_exp_density_leq_tm_density} and ($\ddagger$) follows from the Margin condition. Now combining \eqref{eq: Ramargin1} and \eqref{eq: Ramargin2}, we get from \eqref{eq: R_a_breakdownbymargin}:
\begin{align}
    R_a^{(m)} \lesssim \epsilon^{\alpha - d - 1} \ln{t_{m-1}} + t_{m} \epsilon^{1+\alpha} \label{eq: Rawithtwotermsfinal}
\end{align}
By the choice of our batch end points $t_m = \lfloor a t_{m-1}^\gamma \rfloor$, then it is easy to see using a geometric sum in the exponent, $t_m = \Theta(T^{\frac{1-\gamma^{m}}{1-\gamma^M}})$ with $\gamma = \frac{1+ \alpha}{2+d}$. Now, balancing the two terms in \eqref{eq: Rawithtwotermsfinal} and solving for $\epsilon$, we get $\epsilon = [t_{m-1}^{-1} \ln{t_{m-1}}]^{\frac{1}{2+d}}$. Therefore, we have:
\begin{align}
    R_a^{(m)} \lesssim t_m [t_{m-1}^{-1} \ln{t_{m-1}}]^{\frac{1+\alpha}{2+d}} \lesssim T^{\frac{1-\gamma^{m}}{1-\gamma^M}} \cdot T^{-\left(\frac{1-\gamma^{m-1}}{1-\gamma^M}\right) \left(\frac{1+\alpha}{2+d}\right)} \cdot  \left(\ln{t_{m-1}}\right)^{\frac{1+\alpha}{2+d}}
    = T^{\frac{1-\gamma}{1-\gamma^M}}\left(\ln{t_{m-1}}\right)^{\gamma}.
\end{align}
Now, using Lemma \ref{lem: regret_breakdown_into_Ra},
\begin{align}
    R_T(\pi) &= \sum_{a \in \mathcal{A}} \sum_{m=1}^M R_a^{(m)}(\pi) \nonumber\\
    &\lesssim \sum_{a \in \mathcal{A}} \sum_{m=1}^M T^{\frac{1-\gamma}{1-\gamma^M}}\left(\ln{t_{m-1}}\right)^{\gamma}\nonumber\\
    & \lesssim |\mathcal{A}| M T^{\frac{1-\gamma}{1-\gamma^M}}\left(\ln{T}\right)^{\gamma}. \nonumber
\end{align}
\end{proof}
Next, we state the minimax lower bound on the regret  achievable by any M-batch policy $(\mathcal{G}, \pi)$ as established by \cite{jiang2025batched} and show that it matches the upper bound in Theorem \ref{thm: upperbound_BakNNUCB} up to logarithm factors. 

\begin{remark}[Comparison with Previous Work]
Since \citet{jiang2025batched} is the only prior work that addresses the batched nonparametric bandit setting, it is important to emphasize that our proof techniques differ substantially from theirs. While their analysis builds on the binning-based framework of \citet{perchet2013multi}, our regret analysis requires non-trivial extensions of the adaptive $k$-NN UCB algorithm of \citet{zhao2024contextual} to the batched setting.
In particular, our analysis is fundamentally batch-aware: all supporting lemmas and the final regret bound are developed by carefully balancing the batch endpoints and are first established in a batch-wise fashion. Moreover, our supporting lemmas refine the analysis in \citet{zhao2024contextual} by clarifying implicit assumptions and extending the argument to handle the batch-constrained feedback setting. These technical developments are essential to handling the delayed feedback and restricted policy updates that characterize the batched regime.
\end{remark}
Our main contribution lies in achieving the same minimax-optimal regret rate as \citet{jiang2025batched}, while introducing a conceptually simpler and data-adaptive algorithm that consistently outperforms binning-based methods in practice. In order to establish this, we present the fundamental limits of the batched nonparametric bandit problem as characterized by \cite{jiang2025batched}.

\begin{theorem}[Minimax lower bound for nonparametric batched bandits \citet{jiang2025batched}] \label{thm: lowerbd_kNNUCBbatched}
Let $\mathcal{F}(L, \alpha)$ denote the class of functions that satisfy Lipschitz smoothness (Assumption \ref{assum: Smoothness}) with Lipschitz constant $L$ and margin condition (Assumption \ref{assum: Margin}).  For any \(M\)-batch policy \(\pi\) deployed over \(T\) rounds, the minimax expected cumulative regret satisfies:
\[
\inf_\pi \sup_{f_1, f_2 \in \mathcal{F}(L, \alpha)} R_T(\pi) \gtrsim T^{\frac{1 - \gamma}{1 - \gamma^M}}, \quad \text{where } \gamma = \frac{\alpha + 1}{2 + d}.
\]
\end{theorem}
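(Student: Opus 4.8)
Because the statement is a worst-case lower bound over the class $\mathcal{F}(L,\alpha)$, I would prove it by reduction to multiple-hypothesis testing combined with a change-of-measure argument, adapted to respect the limited adaptivity of $M$-batch policies; it suffices to take $K=2$ arms and exhibit a family of reward pairs against which every $M$-batch policy suffers the stated regret. Concretely, inside $\mathcal{X}$ I would fix a unit cube, tile it with a regular grid of cubes of side $h$, and over a chosen subset of cubes place smooth localized bumps of height $\epsilon\asymp Lh$, setting $f_1-f_2=\pm\epsilon$ on each selected cube (with a smooth tent/plateau profile so each $f_a$ retains Lipschitz constant $L$, satisfying Assumption \ref{assum: Smoothness}) and $f_1=f_2$ elsewhere; the sign on each bump is a hidden bit whose flip swaps the optimal arm there. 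Taking the context distribution uniform on the cube (so Assumption \ref{assum: boundedsupport} holds with $\underbar{c}>0$) and tuning the number of bumps so that the mass of contexts with gap $\le\delta$ scales as $\delta^{\alpha}$ places the family in $\mathcal{F}(L,\alpha)$ and saturates the margin condition (Assumption \ref{assum: Margin}). This construction mirrors the nonparametric bump families of \citet{rigollet2010nonparametric, perchet2013multi}.

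\textbf{Batched coupling and balancing.} The batched structure enters through the sample budget available for testing: since the actions in batch $m$ are $\mathcal{F}_{t_{m-1}}$-measurable, the only information usable to resolve a bump's sign before batch $m$ is the $\sim t_{m-1}$ samples accumulated through the previous boundary. Standard sub-Gaussian testing (or a conditional Assouad bound) shows a bump of height $\epsilon$ cannot be reliably resolved from $n$ in-cube samples unless $n\gtrsim\epsilon^{-2}$; spreading $t_{m-1}$ samples over $\asymp h^{-d}$ cubes and optimizing $h$ yields a minimal detectable gap $\epsilon_m\asymp t_{m-1}^{-1/(2+d)}$ in batch $m$. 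Bumps below this resolution are effectively random, so the margin condition forces regret $\gtrsim t_m\,\epsilon_m^{1+\alpha}$ within batch $m$ --- structurally the same quantity balanced in the proof of Theorem \ref{thm: upperbound_BakNNUCB}. Finally, for any grid $(t_m)$ chosen by the policy, a telescoping/pigeonhole argument shows that some batch must satisfy $t_m\,\epsilon_m^{1+\alpha}\gtrsim T^{(1-\gamma)/(1-\gamma^M)}$ with $\gamma=\frac{1+\alpha}{2+d}$; taking the adversary's worst instance on that batch gives $\inf_\pi\sup_f R_T(\pi)\gtrsim T^{(1-\gamma)/(1-\gamma^M)}$.

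\textbf{Main obstacle.} The genuinely batched content, and the hardest part to make rigorous, is the coupling argument of the previous step: one must show that a policy whose batch-$m$ actions depend only on $\mathcal{F}_{t_{m-1}}$ cannot use within-batch feedback to reduce regret on the sub-resolution bumps, and then chain these stagewise bounds so that the per-stage resolutions $\epsilon_m\asymp t_{m-1}^{-1/(2+d)}$ compound into the geometric schedule. This requires treating each batch as a single non-adaptive decision and controlling the conditional KL divergence between the two sign hypotheses across all $M$ stages simultaneously; it is precisely this recursion, absent in the classical single-stage nonparametric lower bounds, that produces the exponent $(1-\gamma)/(1-\gamma^M)$.
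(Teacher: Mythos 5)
This theorem is not proved in the paper at all: it is restated verbatim from \citet{jiang2025batched}, so there is no internal proof to compare your attempt against. Your sketch follows essentially the same route as that cited source (which in turn builds on the constructions of \citet{rigollet2010nonparametric} and \citet{perchet2013multi}): Assouad-type bump instances of height $\epsilon\asymp Lh$ tuned to saturate the margin condition, a per-batch change-of-measure argument showing that signs of bumps below the resolution $t_{m-1}^{-1/(2+d)}$ cannot be identified by actions that are measurable with respect to the history up to $t_{m-1}$, and the telescoping pigeonhole (if $t_m\le C\,t_{m-1}^{\gamma}$ for all $m$ then $T=t_M\le C^{(1-\gamma^M)/(1-\gamma)}$) forcing some batch to incur regret $\gtrsim T^{(1-\gamma)/(1-\gamma^M)}$. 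Your outline, including the per-batch detectability threshold and the regret accounting $t_m\epsilon_m^{1+\alpha}$ via the margin condition, is the correct and standard argument for this bound.
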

Theorem \ref{thm: lowerbd_kNNUCBbatched} characterizes the fundamental difficulty of learning within this class of problems and shows that our BaNk-UCB algorithm's upper bound matches this minimax lower bound up to logarithmic factors.

Note that, when $M \gtrsim \ln(\ln{T})$ and the number of arms $|\mathcal{A}| \lesssim \ln{T}$, the cumulative regret simplifies to
$
R_T(\pi) =  \tilde{O}(T^{1-\gamma}),
$
recovering the known minimax optimal rate for fully sequential (non-batched) nonparametric bandits \citep{perchet2013multi}. This condition implies that, surprisingly, only a relatively modest increase in the number of batches (log-logarithmic in the horizon $T$) is sufficient to achieve the fully sequential optimal rate. Additionally, the mild logarithmic restriction on the number of actions $|\mathcal{A}|$ reflects practical scenarios where the action set is moderately large but not excessively growing with $T$, highlighting the efficiency of the BaNk-UCB algorithm in nearly matching fully adaptive performance despite batching constraints.

\section{Experiments} 
\label{sec: Experiments}
In this section, we present numerical simulations and real-data experiments to illustrate the performance of the proposed Batched Nonparametric k-NN UCB algorithm (BaNk-UCB) in comparison to the nonparametric analogue: Batched Successive Elimination with Dynamic Binning (BaSEDB) algorithm of \cite{jiang2025batched}.
\subsection{Simulated Data} \label{sec: simulated_data}
We consider the following simulation settings:\\
\begin{figure}[h]
    \centering
\includegraphics[width=0.4\linewidth]{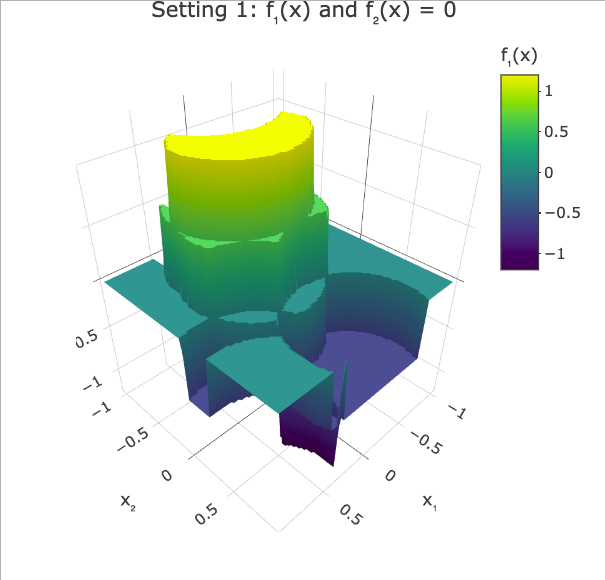}
\includegraphics[width=0.357\linewidth]{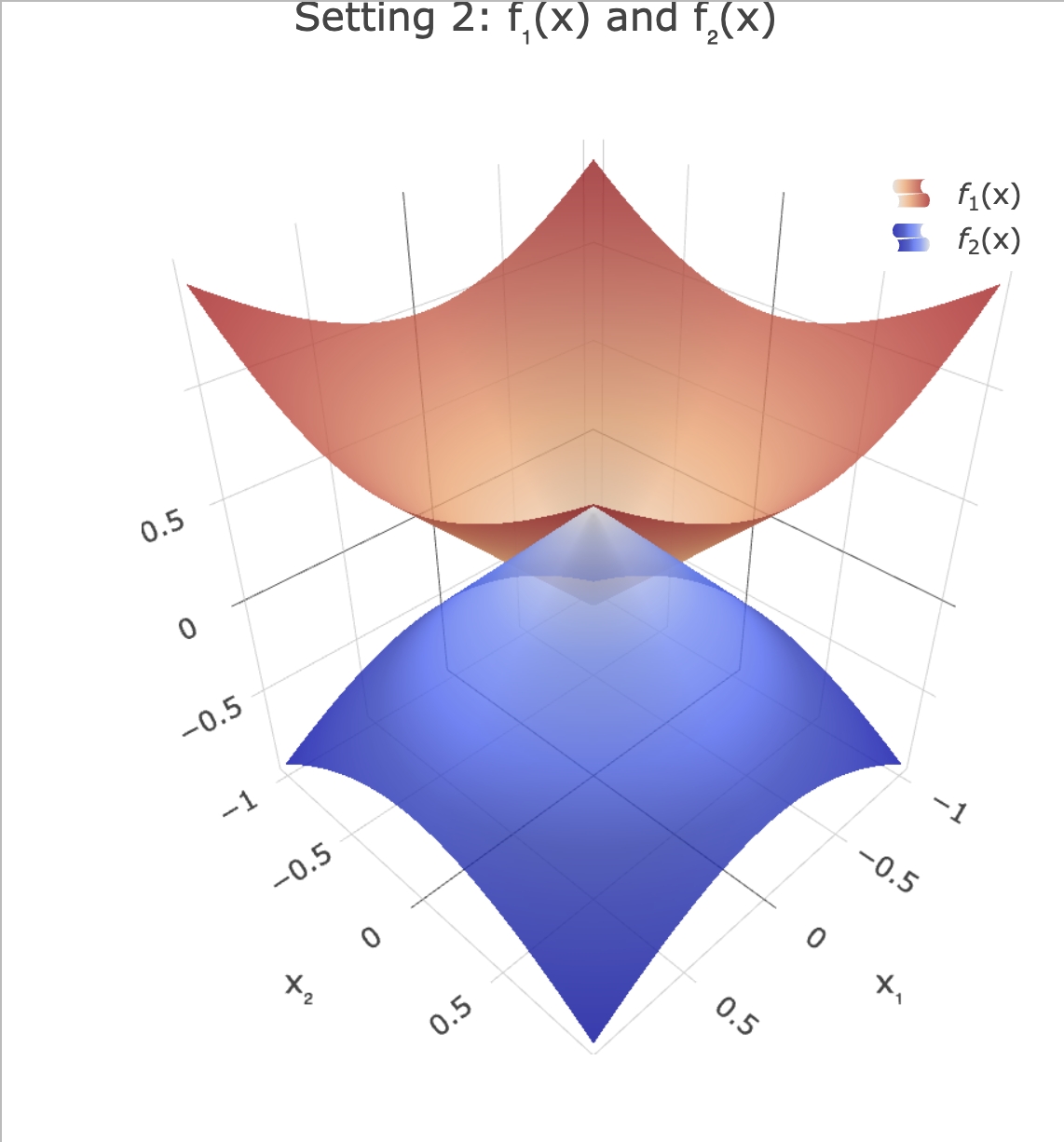}\\
 \includegraphics[width=0.44\linewidth]{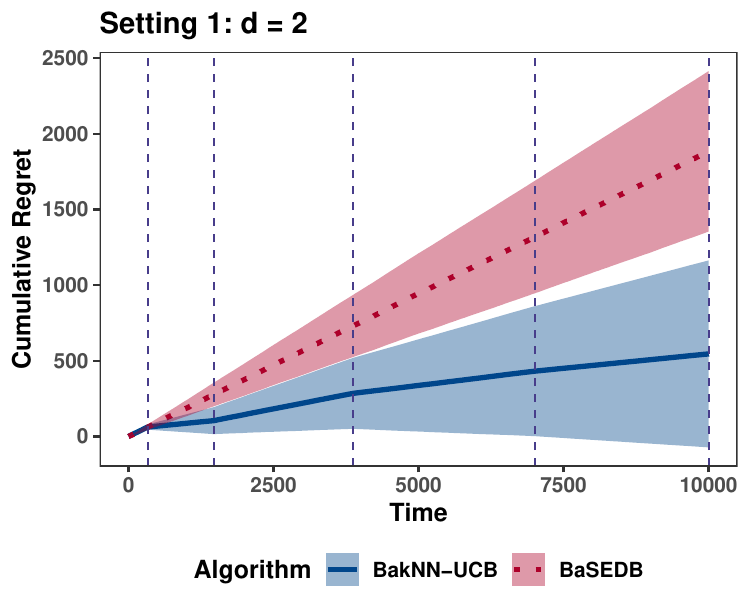}
        \includegraphics[width=0.44\linewidth]{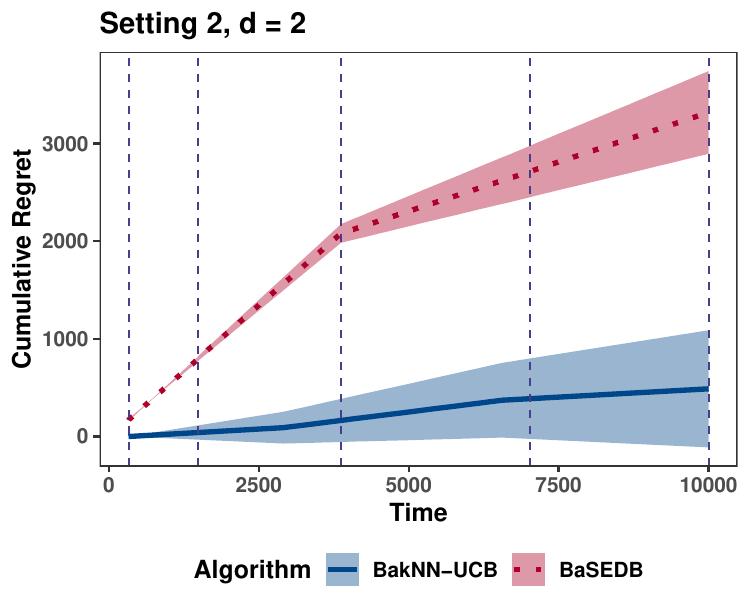}
    \caption{Top row (left to right): Reward functions for the two arms in Setting 1 and 2, respectively. Bottom row: Cumulative regret comparison for BaSEDB and BaNk-UCB algorithms over 30 runs.}
    \label{fig:simulation_settings}
\end{figure}
 \textbf{Setting 1: } Motivated by the construction of the function class for the regret lower bound in \citet{jiang2025batched}, we make the following choices for $f_1$ and $f_2$: 
$f_1(x) = \sum_{j=1}^D v_j h I\{x \in \mathcal{B}_j\}, x \in \mathcal{X}$, and $f_2(x) = 0$,
where $v_j \in \{-1,1\}$ for $j = 1,\hdots, D$, $\mathcal{B}_j$ is a ball centered at $c_j$ with radius $r$. In Figure \ref{fig:simulation_settings}, we set $\mathcal{X} = [-1,1]^d$ (with a uniform $P_X$) with $d = 2,
r = 0.6, D = 6$, with randomly generated centers for $\mathcal{B}_j$ and Rademacher random variables $v_j, j = 1,\hdots, 6$. Note that, Setting 1 is derived from the regret lower bound construction and represents a worst-case instance for nonparametric bandits under margin conditions.

\textbf{Setting 2: } As illustrated in Figure \ref{fig:simulation_settings} consider the following choice of mean reward functions:
$f_1(x) = \|x\|_2$ and $f_2(x) = 0.5-\|x\|_2$, where $X$ is sampled uniformly from  $[-1,1]^d$, with $d = 2$.

We set $T = 10000$, $L = 1$ for the Lipschitz constant in Assumption \ref{assum: Smoothness}. We fix the number of batches to $M=5$ to balance between frequent updates and computational efficiency, but the results remain consistent across different choices of $M$. For the BaSEDB algorithm, we follow the specifications described in \cite{jiang2025batched} for choosing grid points and bin-widths. For our proposed BaNk-UCB algorithm, we choose the same batch grid for a fair comparison. 

In Figure \ref{fig:simulation_settings}, we plot the cumulative regret averaged over 30 independent runs. In order to present an empirical assessment of the variability inherent in our simulations, the shaded regions represent empirical confidence intervals computed as $\pm 1.96$ times the standard error across these runs.  The vertical dotted blue lines denote the grid choices for the batches.\\
\texttt{BaNk-UCB} consistently outperforms \texttt{BaSEDB} across all experimental settings. Although our batch sizes were selected based on empirical performance, they align closely with the theoretically motivated schedule in Section~\ref{sec: batch_details}. Importantly, we find that performance is robust to the specific number of batches, as long as batch endpoints follow the prescribed growth pattern. This suggests that \texttt{BaNk-UCB} does not require precise tuning of the batch schedule to perform well.\\
In Appendix~\ref{sec: additional_exp}, we extend the comparison to higher-dimensional contexts ($d = 3,4,5$), where both methods degrade in performance, yet \texttt{BaNk-UCB} maintains a consistent advantage over \texttt{BaSEDB}. A key practical benefit of \texttt{BaNk-UCB} is its minimal tuning overhead. Unlike binning-based algorithms such as \texttt{BaSEDB}, which depend on careful calibration of bin widths, refinement rates, and arm elimination thresholds—often requiring knowledge of problem-specific parameters—\texttt{BaNk-UCB} relies on a fully data-driven nearest neighbor strategy. Its adaptively chosen $k$ automatically balances bias and variance based on local data density, without needing explicit smoothness or margin parameters. This makes \texttt{BaNk-UCB} both more robust to misspecification and easier to implement in practice.

\subsection{Real Data}\label{sec: real_data}
We evaluate the performance of BaNk-UCB and BaSEDB algorithm on three publicly available classification datasets: (a) \emph{Rice} \citep{rice_dataset}, consisting of 3810 samples with 7 morphological features used to classify two rice varieties; (b) \emph{Occupancy Detection} \citep{occupancy_dataset}, with 8143 samples and 5 environmental sensor features used to predict room occupancy; and (c) \emph{EEG Eye State} \citep{eeg_dataset}, with 14980 samples and 14 EEG measurements used to classify eye state.
\begin{figure}[h!]
    \centering
    \includegraphics[width=0.32\linewidth]{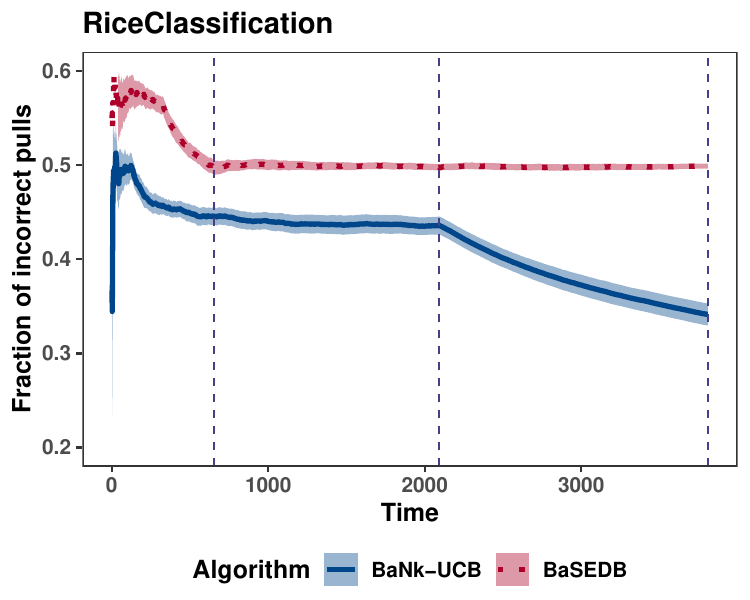}
    \includegraphics[width=0.32\linewidth]{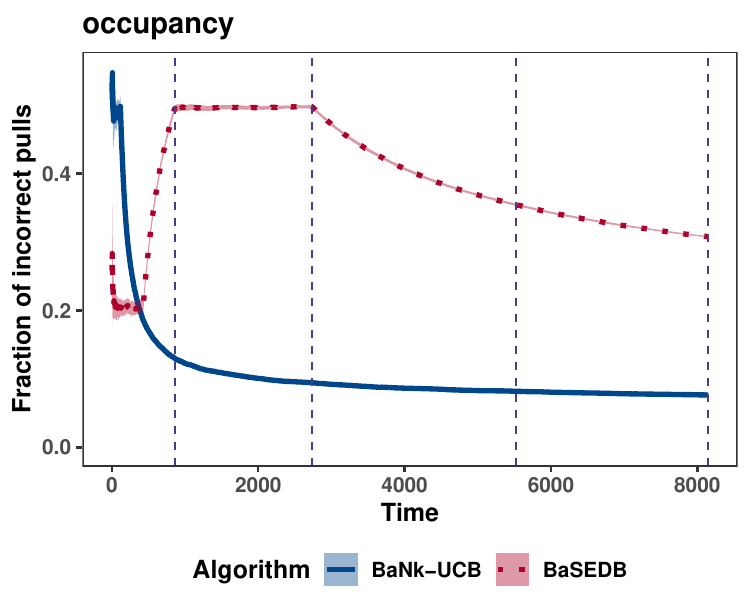}
    \includegraphics[width=0.32\linewidth]{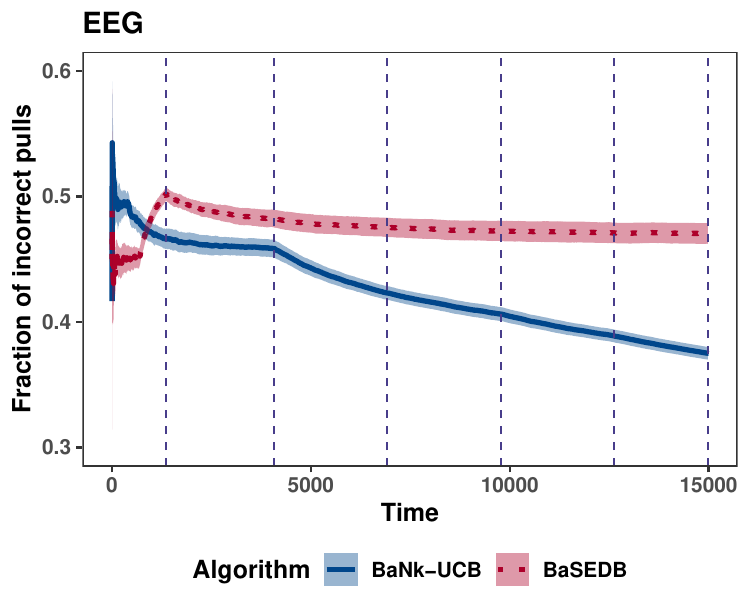}
    \caption{Rolling average fraction of incorrect decisions across three real datasets. BaNk-UCB achieves lower error and faster learning than BaSEDB.}
    \label{fig:real_data}
\end{figure}
In all cases, we treat the true label as the optimal action and assign a binary reward of 1 if the selected action matches the label, and 0 otherwise. We simulate a contextual bandit setting where the context $x_t$ is observed, the learner selects an arm $a_t \in \{1,\dots,K\}$, and observes only the reward for the chosen arm. We set the number of arms $K$ equal to the number of classes (which is $K =2$ for the three datasets considered) and choose the number of batches to be 3, 4, and 6 respectively, based on dataset size. The number of batches was selected based on the total number of samples to ensure reasonable granularity while maintaining batch sizes that approximately align with our theoretically motivated geometric schedule.
The rolling fraction of incorrect decisions is computed using a windowed average over 30 independent random permutations of each dataset. 
In Figure \ref{fig:real_data}, we plot the rolling fraction of incorrect decisions with shaded regions ($\pm 1.96$ standard errors) for uncertainty quantification  as a function of the number of observed instances. \texttt{BaNk-UCB} consistently outperforms \texttt{BaSEDB} across all datasets. For the EEG dataset, which has the highest context dimensionality, \texttt{BaNk-UCB} exhibits \textit{faster convergence and consistently lower error}, suggesting its advantage in \textit{capturing local structure in high-dimensional spaces}.  Batch sizes are chosen according to theoretical guidelines and are identical for both algorithms.

\section{Conclusion} \label{sec: conclusion}
We introduced \texttt{BaNk-UCB}, a nonparametric algorithm for batched contextual bandits that combines adaptive $k$-nearest neighbor regression with the UCB principle. Unlike binning-based methods, \texttt{BaNk-UCB} leverages the local geometry of the context space and naturally adapts to heterogeneous data distributions. We established near-optimal regret guarantees under standard Lipschitz smoothness and margin conditions and proposed a theoretically grounded batch schedule that balances regret across batches. In addition to its theoretical robustness, empirically we illustrate that \texttt{BaNk-UCB} is resilient to batch scheduling choices and requires minimal parameter tuning, making it suitable for practical deployment in real-world systems. Empirical evaluations on both synthetic and real-world classification datasets demonstrate that \texttt{BaNk-UCB} consistently outperforms existing nonparametric baselines, particularly in high-dimensional or irregular context spaces.\\

Despite these advantages, several open challenges remain. Although our regret guarantees show that $k$-NN performs well in moderate dimensions, its statistical accuracy may deteriorate in very high-dimensional regimes due to the regret bound's dependence on the ambient context dimension $d$. However, prior work on $k$-NN regression suggests that it can adapt to the intrinsic dimension of the context distribution, which may mitigate this issue. Formalizing this adaptation in the batched bandit setting remains an exciting direction for future work. Additionally, while our algorithm uses a theoretically motivated batch schedule, real-world systems may impose scheduling constraints that deviate from the idealized setting. Although our method performs well empirically under various batch schedules, deriving theoretical guarantees under arbitrary batch schedules is another important extension. Future work may also explore adaptive strategies for estimating smoothness and margin parameters, eliminating extraneous logarithmic factors in regret bounds, and generalizing the framework to infinite or structured action spaces.

\subsubsection*{Broader Impact Statement}
This work develops a theoretically grounded algorithm for sequential decision-making in batched settings, with applications in domains such as personalized medicine, online education, and adaptive experimentation. By improving statistical efficiency under limited feedback, our approach could contribute to safer and more effective decision-making in resource-constrained or high-stakes environments. However, care should be taken when applying such methods in sensitive domains, particularly in ensuring that fairness, transparency, and domain-specific constraints are accounted for. Our analysis does not directly consider fairness or robustness under distribution shift, and these remain important directions for future work.



\bibliography{main}

\begin{thebibliography}{36}
\providecommand{\natexlab}[1]{#1}
\providecommand{\url}[1]{\texttt{#1}}
\expandafter\ifx\csname urlstyle\endcsname\relax
  \providecommand{\doi}[1]{doi: #1}\else
  \providecommand{\doi}{doi: \begingroup \urlstyle{rm}\Url}\fi

\bibitem[Arya \& Song(2025)Arya and Song]{arya2025semi}
Sakshi Arya and Hyebin Song.
\newblock Semi-parametric batched global multi-armed bandits with covariates.
\newblock \emph{arXiv preprint arXiv:2503.00565}, 2025.

\bibitem[Arya \& Sriperumbudur(2023)Arya and Sriperumbudur]{arya2023kernel}
Sakshi Arya and Bharath~K Sriperumbudur.
\newblock Kernel $\epsilon$-greedy for contextual bandits.
\newblock \emph{arXiv preprint arXiv:2306.17329}, 2023.

\bibitem[Biermann(2014)]{eeg_dataset}
H.~Biermann.
\newblock Eeg eye state dataset.
\newblock \url{https://archive.ics.uci.edu/ml/datasets/EEG+Eye+State}, 2014.

\bibitem[Cammeo \& Osmancik(2020)Cammeo and Osmancik]{rice_dataset}
G.~Cammeo and T.~Osmancik.
\newblock Rice (cammeo and osmancik).
\newblock \url{https://archive.ics.uci.edu/ml/datasets/Rice+(Cammeo+and+Osmancik)}, 2020.

\bibitem[Candanedo \& Feldheim(2016)Candanedo and Feldheim]{occupancy_dataset}
Luis~M. Candanedo and Véronique Feldheim.
\newblock Occupancy detection data set.
\newblock \url{https://archive.ics.uci.edu/dataset/357/occupancy+detection}, 2016.

\bibitem[Esfandiari et~al.(2021)Esfandiari, Karbasi, Mehrabian, and Mirrokni]{Esfandiari_Karbasi_Mehrabian_Mirrokni_2021}
Hossein Esfandiari, Amin Karbasi, Abbas Mehrabian, and Vahab Mirrokni.
\newblock Regret bounds for batched bandits.
\newblock \emph{Proceedings of the AAAI Conference on Artificial Intelligence}, 35\penalty0 (8):\penalty0 7340--7348, May 2021.

\bibitem[Even-Dar et~al.(2006)Even-Dar, Mannor, Mansour, and Mahadevan]{even2006action}
Eyal Even-Dar, Shie Mannor, Yishay Mansour, and Sridhar Mahadevan.
\newblock Action elimination and stopping conditions for the multi-armed bandit and reinforcement learning problems.
\newblock \emph{Journal of machine learning research}, 7\penalty0 (6), 2006.

\bibitem[Feng et~al.(2022)Feng, Huang, and Wang]{feng2022lipschitz}
Yasong Feng, Zengfeng Huang, and Tianyu Wang.
\newblock Lipschitz bandits with batched feedback.
\newblock In Alice~H. Oh, Alekh Agarwal, Danielle Belgrave, and Kyunghyun Cho (eds.), \emph{Advances in Neural Information Processing Systems}, 2022.

\bibitem[Gao et~al.(2019)Gao, Han, Ren, and Zhou]{gao2019batched}
Zijun Gao, Yanjun Han, Zhimei Ren, and Zhengqing Zhou.
\newblock Batched multi-armed bandits problem.
\newblock \emph{Advances in Neural Information Processing Systems}, 32, 2019.

\bibitem[Gu et~al.(2024)Gu, Karbasi, Khosravi, Mirrokni, and Zhou]{Gu_batchedNeuralBandits2024}
Quanquan Gu, Amin Karbasi, Khashayar Khosravi, Vahab Mirrokni, and Dongruo Zhou.
\newblock Batched neural bandits.
\newblock \emph{ACM / IMS J. Data Sci.}, 1\penalty0 (1), January 2024.

\bibitem[Guan \& Jiang(2018)Guan and Jiang]{guan2018nonparametric}
Melody Guan and Heinrich Jiang.
\newblock Nonparametric stochastic contextual bandits.
\newblock In \emph{Proceedings of the AAAI Conference on Artificial Intelligence}, volume~32, 2018.

\bibitem[Han et~al.(2020)Han, Zhou, Zhou, Blanchet, Glynn, and Ye]{han2020sequential}
Yanjun Han, Zhengqing Zhou, Zhengyuan Zhou, Jose Blanchet, Peter~W Glynn, and Yinyu Ye.
\newblock Sequential batch learning in finite-action linear contextual bandits.
\newblock \emph{arXiv preprint arXiv:2004.06321}, 2020.

\bibitem[Hu et~al.(2020)Hu, Kallus, and Mao]{hu2020smooth}
Yichun Hu, Nathan Kallus, and Xiaojie Mao.
\newblock Smooth contextual bandits: Bridging the parametric and non-differentiable regret regimes.
\newblock In \emph{Conference on Learning Theory}, pp.\  2007--2010. PMLR, 2020.

\bibitem[Jiang \& Ma(2025)Jiang and Ma]{jiang2025batched}
Rong Jiang and Cong Ma.
\newblock Batched nonparametric contextual bandits.
\newblock \emph{IEEE Transactions on Information Theory}, 2025.

\bibitem[Jin et~al.(2021)Jin, Tang, Xu, Huang, Xiao, and Gu]{jin2021almost}
Tianyuan Jin, Jing Tang, Pan Xu, Keke Huang, Xiaokui Xiao, and Quanquan Gu.
\newblock Almost optimal anytime algorithm for batched multi-armed bandits.
\newblock In \emph{International Conference on Machine Learning}, pp.\  5065--5073. PMLR, 2021.

\bibitem[Kalkanli \& Ozgur(2021)Kalkanli and Ozgur]{batched_thompson_sampling}
Cem Kalkanli and Ayfer Ozgur.
\newblock Batched thompson sampling.
\newblock In \emph{Advances in Neural Information Processing Systems}, volume~34, pp.\  29984--29994. Curran Associates, Inc., 2021.

\bibitem[Kim et~al.(2011)Kim, Herbst, Wistuba, Lee, Blumenschein~Jr, Tsao, Stewart, Hicks, Erasmus~Jr, Gupta, et~al.]{kim2011battle}
Edward~S Kim, Roy~S Herbst, Ignacio~I Wistuba, J~Jack Lee, George~R Blumenschein~Jr, Anne Tsao, David~J Stewart, Marshall~E Hicks, Jeremy Erasmus~Jr, Sanjay Gupta, et~al.
\newblock The battle trial: personalizing therapy for lung cancer.
\newblock \emph{Cancer discovery}, 1\penalty0 (1):\penalty0 44--53, 2011.

\bibitem[Kpotufe(2011)]{kpotufe2011k}
Samory Kpotufe.
\newblock k-nn regression adapts to local intrinsic dimension.
\newblock \emph{Advances in neural information processing systems}, 24, 2011.

\bibitem[Krause \& Ong(2011)Krause and Ong]{krause2011contextual}
Andreas Krause and Cheng Ong.
\newblock Contextual gaussian process bandit optimization.
\newblock \emph{Advances in neural information processing systems}, 24, 2011.

\bibitem[Lai et~al.(1983)Lai, Robbins, and Siegmund]{lai1983sequential}
Tze~Leung Lai, Herbert Robbins, and David Siegmund.
\newblock Sequential design of comparative clinical trials.
\newblock In \emph{Recent Advances in Statistics}, pp.\  51--68. Elsevier, 1983.

\bibitem[Li et~al.(2010)Li, Chu, Langford, and Schapire]{li2010contextual}
Lihong Li, Wei Chu, John Langford, and Robert~E Schapire.
\newblock A contextual-bandit approach to personalized news article recommendation.
\newblock In \emph{Proceedings of the 19th international conference on World wide web}, pp.\  661--670, 2010.

\bibitem[Mammen \& Tsybakov(1999)Mammen and Tsybakov]{mammen1999margin}
Enno Mammen and Alexandre~B Tsybakov.
\newblock Smooth discrimination analysis.
\newblock \emph{The Annals of Statistics}, 27\penalty0 (6):\penalty0 1808--1829, 1999.
\newblock \doi{10.1214/aos/1017939240}.

\bibitem[Mao et~al.(2018)Mao, Chen, Wagle, Pan, Natkovich, and Matheson]{mao2018batched}
Yizhi Mao, Miao Chen, Abhinav Wagle, Junwei Pan, Michael Natkovich, and Don Matheson.
\newblock A batched multi-armed bandit approach to news headline testing.
\newblock In \emph{2018 IEEE International Conference on Big Data (Big Data)}, pp.\  1966--1973. IEEE, 2018.

\bibitem[Perchet \& Rigollet(2013)Perchet and Rigollet]{perchet2013multi}
Vianney Perchet and Philippe Rigollet.
\newblock The multi-armed bandit problem with covariates.
\newblock \emph{The Annals of Statistics}, 2013.

\bibitem[Perchet et~al.(2016)Perchet, Rigollet, Chassang, and Snowberg]{perchet_batched2016}
Vianney Perchet, Philippe Rigollet, Sylvain Chassang, and Erik Snowberg.
\newblock {Batched bandit problems}.
\newblock \emph{The Annals of Statistics}, 44\penalty0 (2):\penalty0 660 -- 681, 2016.

\bibitem[Qian \& Yang(2016)Qian and Yang]{qian2016kernel}
Wei Qian and Yuhong Yang.
\newblock Kernel estimation and model combination in a bandit problem with covariates.
\newblock \emph{Journal of Machine Learning Research}, 17\penalty0 (149), 2016.

\bibitem[Reeve et~al.(2018)Reeve, Mellor, and Brown]{kNN_bandits}
Henry Reeve, Joe Mellor, and Gavin Brown.
\newblock The k-nearest neighbour ucb algorithm for multi-armed bandits with covariates.
\newblock In Firdaus Janoos, Mehryar Mohri, and Karthik Sridharan (eds.), \emph{Proceedings of Algorithmic Learning Theory}, volume~83 of \emph{Proceedings of Machine Learning Research}, pp.\  725--752. PMLR, 07--09 Apr 2018.

\bibitem[Ren et~al.(2022)Ren, Zhou, and Kalagnanam]{Ren_etal_glm_batchedbandit2022}
Zhimei Ren, Zhengyuan Zhou, and Jayant~R. Kalagnanam.
\newblock Batched learning in generalized linear contextual bandits with general decision sets.
\newblock \emph{IEEE Control Systems Letters}, 6:\penalty0 37--42, 2022.

\bibitem[Rigollet \& Zeevi(2010)Rigollet and Zeevi]{rigollet2010nonparametric}
Philippe Rigollet and Assaf Zeevi.
\newblock Nonparametric bandits with covariates.
\newblock \emph{Conference on Learning Theory (COLT)}, pp.\ ~54, 2010.

\bibitem[Schwartz et~al.(2017)Schwartz, Bradlow, and Fader]{schwartz2017customer}
Eric~M. Schwartz, Eric~T. Bradlow, and Peter~S. Fader.
\newblock Customer acquisition via display advertising using multi-armed bandits.
\newblock \emph{Marketing Science}, 36\penalty0 (4):\penalty0 500--522, 2017.

\bibitem[Tewari \& Murphy(2017)Tewari and Murphy]{tewari2017ads}
Ambuj Tewari and Susan~A Murphy.
\newblock From ads to interventions: Contextual bandits in mobile health.
\newblock \emph{Mobile health: sensors, analytic methods, and applications}, pp.\  495--517, 2017.

\bibitem[Tsybakov(2004)]{tsybakov2004optimal}
Alexandre~B. Tsybakov.
\newblock Optimal aggregation of classifiers in statistical learning.
\newblock \emph{Annals of Statistics}, 32\penalty0 (1):\penalty0 135--166, 2004.
\newblock \doi{10.1214/aos/1079120131}.

\bibitem[Valko et~al.(2013)Valko, Korda, Munos, Flaounas, and Cristianini]{valko2013finite}
Michal Valko, Nathan Korda, R{\'e}mi Munos, Ilias Flaounas, and Nello Cristianini.
\newblock Finite-time analysis of kernelised contextual bandits.
\newblock In \emph{Proceedings of the Twenty-Ninth Conference on Uncertainty in Artificial Intelligence}, pp.\  654--663, 2013.

\bibitem[Yang \& Zhu(2002)Yang and Zhu]{yang2002randomized}
Yuhong Yang and Dan Zhu.
\newblock Randomized allocation with nonparametric estimation for a multi-armed bandit problem with covariates.
\newblock \emph{The Annals of Statistics}, 30\penalty0 (1):\penalty0 100--121, 2002.

\bibitem[Zhao et~al.(2024)Zhao, Wu, Liu, and Wu]{zhao2024contextual}
Puning Zhao, Jiafei Wu, Zhe Liu, and Huiwen Wu.
\newblock Contextual bandits for unbounded context distributions.
\newblock \emph{arXiv preprint arXiv:2408.09655}, 2024.

\bibitem[Zhou et~al.(2020)Zhou, Li, and Gu]{zhou2020neural}
Dongruo Zhou, Lihong Li, and Quanquan Gu.
\newblock Neural contextual bandits with ucb-based exploration.
\newblock In \emph{International Conference on Machine Learning}, pp.\  11492--11502. PMLR, 2020.

\end{thebibliography}
\bibliographystyle{tmlr}

\appendix
\section{Appendix}
In this section, we provide the detailed proof for the regret upper bound for \texttt{BaNk-UCB} algorithm in Theorem \ref{thm: upperbound_BakNNUCB}. First we present the supporting lemmas for establishing the upper bound for the expected regret in Section \ref{sec: lemmas_upper_bound}.

\section{Proof for the Regret Upper Bound} \label{sec: lemmas_upper_bound}
Recall, the batch-wise expected sample density, $p_a^{(m)}(x)$, from \eqref{eq: batchwise_expectedsampledensity}. In Lemma \ref{lem: batchewise_exp_density_leq_tm_density}, we first construct an upper bound for $p_a^{(m)}(x)$ in terms of the context density $p_X(x)$. 
\begin{Lemma}\label{lem: batchewise_exp_density_leq_tm_density}
The batch-wise expected sample density satisfies:
    \begin{align*}
        p_a^{(m)}(x) \leq (t_m - t_{m-1})p_X(x),
    \end{align*}
    for almost all $x \in \mathcal{X}$. 
\end{Lemma}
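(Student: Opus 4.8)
The plan is to work directly from the defining identity \eqref{eq: batchwise_expectedsampledensity} for the batch-wise expected sample density and to bound the joint indicator by the context indicator alone. The key observation is that for every round $t$ in batch $m$ and every $a \in \mathcal{A}$, we have the trivial pointwise inequality $1(X_t \in A, a_t = a) \leq 1(X_t \in A)$, since imposing the additional event $\{a_t = a\}$ can only shrink the indicator. This lets me discard all algorithm-specific structure of the policy $\pi_t$ and reduce the claim to simply counting how many contexts land in $A$ during the batch, which is controlled by the i.i.d.\ context assumption.

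Concretely, I would write
\begin{align*}
\int_A p_a^{(m)}(x)\,dx = \E\left[\sum_{t=t_{m-1}}^{t_m} 1(X_t \in A, a_t = a)\right] \leq \E\left[\sum_{t=t_{m-1}}^{t_m} 1(X_t \in A)\right] = \sum_{t=t_{m-1}}^{t_m} \P(X_t \in A),
\end{align*}
using linearity of expectation. By Assumption \ref{assum: boundedsupport} the contexts are i.i.d.\ with density $p_X$, so $\P(X_t \in A) = \int_A p_X(x)\,dx$ for each $t$, and the number of rounds in batch $m$ is $t_m - t_{m-1}$. Hence $\int_A p_a^{(m)}(x)\,dx \leq (t_m - t_{m-1}) \int_A p_X(x)\,dx$ for every measurable $A \subseteq \mathcal{X}$.

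Finally, I would pass from this integral inequality to the pointwise statement by the standard measure-theoretic argument: taking $A = \{x \in \mathcal{X} : p_a^{(m)}(x) > (t_m - t_{m-1}) p_X(x)\}$ forces $A$ to have Lebesgue measure zero, which yields the claimed bound for almost all $x$.

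The proof is short and presents no serious obstacle; the only points requiring mild care are the indexing convention for the number of rounds in a batch (so the count is $t_m - t_{m-1}$, consistent with the algorithm's rounds $t = t_{m-1}+1,\dots,t_m$) and the final ``holds for all $A$ implies holds a.e.''\ step, which is routine but should be stated explicitly rather than left implicit. Notably, this crude bound does not exploit the independence of $X_t$ from the prior-batch filtration $\mathcal{F}_{t_{m-1}}$; a sharper identity of the form $p_a^{(m)}(x) = \big(\sum_{t} \P(\pi_t(x) = a)\big)\, p_X(x)$ could be derived using that independence, but it is unnecessary here, since bounding each action probability by one recovers exactly the stated factor $t_m - t_{m-1}$.
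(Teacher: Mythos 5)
Your proof is correct and follows essentially the same route as the paper: bound the joint indicator $1(X_t \in A, a_t = a)$ by $1(X_t \in A)$, use the i.i.d.\ context assumption to evaluate the resulting expectation as $(t_m - t_{m-1})\int_A p_X$, and conclude the almost-everywhere pointwise bound from the integral inequality holding for all measurable $A$. If anything, your write-up is slightly more careful than the paper's, which states the set inclusion in the reversed (typographically incorrect) direction and leaves the final ``for all $A$ implies a.e.''\ step implicit.
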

\begin{proof}
Note, since the event $\{X_t \in A\} \subseteq \{X_t \in A, a_t = a\} $,
\begin{align}\label{eq: lem2_ineq1}
     \E\left[\sum_{t=t_{m-1}}^{t_m} 1(X_t \in A, a_t = a)\right] \leq (t_m - t_{m-1})\int_A p_X(x) dx.  
\end{align}
From \eqref{eq: batchwise_expectedsampledensity} and \eqref{eq: lem2_ineq1}, we get that,
\begin{align*}
    \int_A p_a^{(m)}(x) dx \leq (t_m - t_{m-1})\int_A p_X(x) dx,
\end{align*}
for all $A \in \mathcal{X}$. Therefore, $p_a^{(m)}(x) \leq (t_m - t_{m-1})p_X(x)$ for almost all $x \in \mathcal{X}$.
 \end{proof}
Next, we build a concentration bound on the average model noise for the $k$-nearest neighbors around a point $x$. Here, we will use the sub-Gaussianity of  noise (Assumption \ref{assum: subGaussian}) and the fact that we only observe data until the last batch, i.e., for $t \in [t_{m-1}+1, t_m]$, we can only utilize data until time $t_{m-1}$ for estimation.
\begin{Lemma}
\label{lem:uniform_subgaussian}
Let $N_{t_{m-1},k}(x,a)$ denote the set of $k$ nearest neighbors among $\{X_i: i<t_{m-1}, a_i = a\}$. Then, for all $x \in \mathcal{X}$, $a \in \mathcal{A}$, and $k \geq 1$, we have that,
\begin{align}
\mathbb{P}\left( \sup_{x,a,k} \left| \frac{1}{\sqrt{k}} \sum_{i\in \mathcal{N}_{t_{m-1},k}(x,a)} \epsilon_i \right| > u \right) 
\leq d t_{m-1}^{2d+1} |\mathcal{A}| e^{- \frac{u^2}{2\sigma^2}},
\end{align}
where $\epsilon_i$ are independent sub-Gaussian noise terms with variance proxy $\sigma^2$.
\end{Lemma}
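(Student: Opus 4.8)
The plan is to convert the continuous supremum over $x\in\mathcal{X}$ into a maximum over a finite, polynomially large family of index sets, and then apply a union bound together with a sub-Gaussian tail for each fixed set. The key structural observation is that, although $x$ ranges over a continuum, the set $N_{t_{m-1},k}(x,a)$ is a \emph{piecewise-constant} function of $x$: it changes only when $x$ crosses a point equidistant from two observed contexts. Consequently the random quantity $\frac{1}{\sqrt{k}}\sum_{i\in N_{t_{m-1},k}(x,a)}\epsilon_i$ takes only finitely many values as $(x,a,k)$ vary, so the supremum is really a maximum over this finite family.

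\textbf{Counting the neighbor sets.} Fix an arm $a$ and let $n_a\le t_{m-1}$ be the number of past rounds with $a_i=a$. As $x$ varies, the ordering of these $n_a$ contexts by distance to $x$ is constant on each cell of the arrangement of the $\binom{n_a}{2}$ perpendicular bisector hyperplanes $\{z:\|z-X_i\|=\|z-X_j\|\}$. A standard bound on hyperplane arrangements in $\mathbb{R}^d$ gives at most $O(n_a^{2d})$ cells, and within each cell the nested sets $N_{t_{m-1},k}(x,a)$ for $k=1,\dots,n_a$ contribute at most $n_a$ distinct subsets, so the number of distinct $k$-NN index sets for arm $a$ is $\lesssim t_{m-1}^{2d+1}$. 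Taking a union over the $|\mathcal{A}|$ arms and absorbing the dimensional constant (and the factor two coming from the two-sided tail) into the leading coefficient reproduces the claimed prefactor $d\,t_{m-1}^{2d+1}|\mathcal{A}|$.

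\textbf{Per-set concentration and assembly.} For any \emph{deterministic} index set $N$ with $|N|=k$, Assumption~\ref{assum: subGaussian} and the independence of the noise imply that $\frac{1}{\sqrt{k}}\sum_{i\in N}\epsilon_i$ is $\sigma^2$-sub-Gaussian, so $\mathbb{P}\big(|\frac{1}{\sqrt{k}}\sum_{i\in N}\epsilon_i|>u\big)\le 2e^{-u^2/(2\sigma^2)}$. I would then freeze the family of candidate neighbor sets by conditioning on the design, i.e.\ on the $\sigma$-algebra generated by the observed contexts and the selected actions up to $t_{m-1}$ (note: on the contexts and actions only, \emph{not} the rewards, since the latter would determine the noise outright). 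Given this information the candidate family is a fixed collection of size at most $d\,t_{m-1}^{2d+1}|\mathcal{A}|$, and a union bound over it, followed by taking expectations, would yield the stated inequality.

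\textbf{Main obstacle.} The delicate point is precisely this interchange: the neighbor sets are adaptively \emph{data-dependent}, so the union bound is only legitimate once the family is frozen, and one must check that $\frac{1}{\sqrt{k}}\sum_{i\in N}\epsilon_i$ remains $\sigma^2$-sub-Gaussian after conditioning. This is subtle because the adaptive actions are functions of past noise, so naively conditioning on them can distort the noise tails (a rare action pattern can concentrate a single $\epsilon_i$ away from zero). The resolution I would pursue leans on the batched structure together with the natural time-filtration: the actions used for estimation in batch $m$ are predictable with respect to the prior-batch history and the noise is conditionally $\sigma^2$-sub-Gaussian given the past, so a martingale/tower argument over the time-ordered filtration preserves the variance proxy without ever conditioning a noise term on its own future-dependent functionals. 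Reconciling this adaptivity with the clean polynomial union bound is the crux, and I expect it to require the most care in the formal write-up.
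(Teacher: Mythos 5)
Your combinatorial architecture is sound and matches the constants in the statement: the bisector-hyperplane arrangement gives $O(n_a^{2d})$ cells per arm, each cell contributes at most $n_a$ nested neighbor sets over $k=1,\dots,n_a$, and the union over arms reproduces the prefactor $d\,t_{m-1}^{2d+1}|\mathcal{A}|$. For contrast, the paper's own proof is two lines: it invokes Lemma~4 of \citet{zhao2024contextual} for fixed $k$, which already supplies the supremum over $x$ and $a$ with the factor $d\,t_{m-1}^{2d}|\mathcal{A}|$, and then union bounds over $k \le t_{m-1}$ to pick up the remaining factor of $t_{m-1}$. So you are in effect reconstructing the interior of the cited lemma rather than citing it, which would be a legitimate and more self-contained route if it were complete.

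It is not complete, and the gap sits exactly where you flag it, with a repair that does not work as described. Conditioning on the ``design'' $\sigma$-algebra (contexts \emph{and} actions up to $t_{m-1}$) is invalid for the reason you yourself give: actions in a given batch are functions of the noise realized in earlier batches, so conditionally on the actions the noise variables are no longer independent $\sigma^2$-sub-Gaussian, and the per-set tail bound you need is unavailable. Your fallback---that the actions are predictable with respect to the prior-batch history, so a tower/martingale argument over the time filtration preserves the variance proxy---addresses the wrong object. Predictability of the actions is not enough, because membership of an index $i$ in $N_{t_{m-1},k}(x,a)$ is \emph{not} predictable in time: whether $X_i$ remains among the $k$ nearest arm-$a$ contexts to $x$ depends on the future, since a later round $j$ with $a_j=a$ and $\|X_j-x\|<\|X_i-x\|$ evicts $i$ from the set. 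Hence $\sum_{i\in N_{t_{m-1},k}(x,a)}\epsilon_i$ is not a sum of predictably-weighted martingale differences, and the tower argument yields nothing directly. Closing the gap requires an additional decoupling device, for instance: union bound over distance-ordered \emph{prefixes} within each cell (these are functions of the contexts alone, and the contexts are independent of the noise); write the $k$-NN sum over such a fixed prefix $S$ as $\sum_{i\in S}\epsilon_i 1(a_i=a)$, whose weights $1(a_i=a)$ \emph{are} predictable; and then use an optional-stopping/self-normalized argument to replace the variance proxy $|S|\sigma^2$ by $k\sigma^2$ on the event that the arm-pull count in the prefix equals $k$. Without some such step (or without simply citing the fixed-$k$ result as the paper does), the union bound over a noise-dependent family of index sets is unjustified, and that step is the heart of the lemma rather than a detail.
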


\begin{proof}[Proof of Lemma \ref{lem:uniform_subgaussian}]
From Lemma 4 of \cite{zhao2024contextual}, we have that of a fixed $k$:
\begin{align}\label{eq: zhao2024_Lemma4}
\mathbb{P}\left( \sup_{x,a} \left| \frac{1}{\sqrt{k}} \sum_{i\in \mathcal{N}_{t_{m-1},k}(x,a)} \epsilon_i \right| > u \right)
\leq d t_{m-1}^{2d}  |\mathcal{A}| e^{- \frac{u^2}{2\sigma^2}}.
\end{align}
Then we apply a union bound over all $k \leq t_{m-1}$ to get,
\begin{align*}
\mathbb{P}\left( \sup_{x,a,k} \left| \frac{1}{\sqrt{k}} \sum_{i\in \mathcal{N}_{t,k}(x,a)} \epsilon_i \right| > u \right)
\leq d t_{m-1}^{2d+1} |\mathcal{A}| e^{- \frac{u^2}{2\sigma^2}}.
\end{align*}
\end{proof}
Note, that Lemma \ref{lem:uniform_subgaussian} is for any batch $m$ and we will use it to bound the batch-wise regret. 
\begin{definition}
    Define the event $\mathcal{E}_m$ as
\begin{align}
\mathcal{E}_m:= \left\{\left| \frac{1}{\sqrt{k}} \sum_{i \in \mathcal{N}_{t_{m-1},k}(x,a)} \epsilon_i \right|
\leq \sqrt{2\sigma^2 \ln(d t_{m-1}^{2d+3} |\mathcal{A}|)} \ \forall \  x, a, k\right\},
\label{eq: eventEm}
\end{align}
\end{definition}
Then, from Lemma \ref{lem:uniform_subgaussian}, it follows that $\mathbb{P}(\mathcal{E}_m) \geq 1 - 1/t_m$.
\begin{Lemma}
\label{lem:concentration_estimator}
Under $\mathcal{E}_m$, we have that the following point-wise estimation error bound for $x \in \mathcal{X}$ and $t \in [t_{m-1}+1, t_m]$:
\begin{align}\label{eq: estimation_error_finalconcentration}
f_a(x) \leq \widehat{f}_{a,t}(x) \leq f_a(x) + 2\xi_{a,t}(x) + 2L d_{a,t}(x),
\end{align}
where $\xi_{a,t}(x)$ and $d_{a,t}(x)$ are as defined in \eqref{eq: xi_errorbound} and \eqref{eq: k_NNdist}, respectively.
\end{Lemma}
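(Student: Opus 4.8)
The plan is to decompose the estimator $\widehat{f}_{a,t}(x)$ into a deterministic averaging-bias part governed by Lipschitz smoothness and a stochastic part governed by the concentration event $\mathcal{E}_m$, and then to observe that the two inflation terms $\xi_{a,t}(x)$ and $L d_{a,t,k}(x)$ appearing in \eqref{eq: kNN_estimator} are precisely large enough to force the claimed two-sided bound. Writing $k = k_{a,t}(x)$ for brevity and recalling that every index $s \in N_{t_{m-1},k}(x,a)$ was pulled under arm $a$, so that $Y_s = f_a(X_s) + \epsilon_s$, I would first split the nearest-neighbor average as
\[
\frac{1}{k}\sum_{s \in N_{t_{m-1},k}(x,a)} Y_s = \frac{1}{k}\sum_{s} f_a(X_s) + \frac{1}{k}\sum_{s}\epsilon_s,
\]
isolating a bias term and a noise term to be controlled separately.

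For the bias term I would invoke Assumption \ref{assum: Smoothness}: for each neighbor $s$ we have $|f_a(X_s) - f_a(x)| \le L\|X_s - x\| \le L d_{a,t,k}(x)$, since by \eqref{eq: k_NNdist} the quantity $d_{a,t,k}(x)$ is the largest neighbor distance. Averaging over the $k$ neighbors and applying the triangle inequality yields $\bigl|\tfrac{1}{k}\sum_s f_a(X_s) - f_a(x)\bigr| \le L d_{a,t,k}(x)$. For the noise term I would use the definition of $\mathcal{E}_m$ in \eqref{eq: eventEm}: on this event $\bigl|\tfrac{1}{\sqrt{k}}\sum_s \epsilon_s\bigr| \le \sqrt{2\sigma^2 \ln(d t_{m-1}^{2d+3}|\mathcal{A}|)}$, and dividing through by $\sqrt{k}$ gives exactly $\bigl|\tfrac{1}{k}\sum_s \epsilon_s\bigr| \le \xi_{a,t}(x)$ by the definition \eqref{eq: xi_errorbound}.

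Combining these, the upper bound follows by taking each of the bias and noise terms at its worst-case positive value and adding the two deterministic inflation terms from the estimator, which gives $\widehat{f}_{a,t}(x) \le f_a(x) + 2L d_{a,t,k}(x) + 2\xi_{a,t}(x)$. The lower bound is where the construction of the estimator does the work: the most negative excursions of the bias and noise terms, of magnitude at most $L d_{a,t,k}(x)$ and $\xi_{a,t}(x)$ respectively, are exactly offset by the added $+L d_{a,t,k}(x)$ and $+\xi_{a,t}(x)$, leaving $\widehat{f}_{a,t}(x) \ge f_a(x)$.

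The one point requiring care — and the reason $\mathcal{E}_m$ is stated with a supremum over $x$, $a$, and $k$ — is that $k = k_{a,t}(x)$ is itself random and data-dependent through the adaptive rule \eqref{eq: adaptive_k}. A concentration bound for a \emph{fixed} $k$ would not be enough, since the selected neighborhood size depends on the very observations entering the noise average; one cannot simply condition $k$ out. The uniform-in-$k$ guarantee furnished by Lemma \ref{lem:uniform_subgaussian}, and hence absorbed into $\mathcal{E}_m$, is exactly what licenses substituting the realized $k_{a,t}(x)$ into the noise bound. I would therefore be explicit that \eqref{eq: estimation_error_finalconcentration} holds simultaneously for the data-chosen $k$, not merely in expectation or for a predetermined value.
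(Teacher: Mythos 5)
Your proposal is correct and follows essentially the same argument as the paper: both decompose $\widehat{f}_{a,t}(x) - f_a(x)$ (net of the inflation terms) into a Lipschitz-controlled bias term bounded by $L d_{a,t}(x)$ and a noise average bounded by $\xi_{a,t}(x)$ on $\mathcal{E}_m$, yielding the two-sided bound. Your explicit remark that the uniformity of $\mathcal{E}_m$ over $k$ is what permits plugging in the data-dependent $k_{a,t}(x)$ is a point the paper uses only implicitly through the definition of $\mathcal{E}_m$, and making it explicit is a welcome clarification rather than a deviation.
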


\begin{proof}
 Observe that for $t \in [t_{m-1}+1, t_m]$, under event $\mathcal{E}_m$ and $x \in \mathcal{X}$:
\begin{align}
&\left| \widehat{f}_{a,t}(x) - \left( f_a(x) + \xi_{a,t}(x) + L d_{a,t}(x) \right) \right|\\
&\leq \left| \frac{1}{k_{a,t}(x)} \sum_{i \in \mathcal{N}_{t}(x,a)} (Y_i - f_a(x)) \right| \nonumber \\
&\leq \frac{1}{k_{a,t}(x)} \sum_{i \in \mathcal{N}_{t}(x,a)} (Y_i - f_a(X_i)) + \frac{1}{k_{a,t}(x)} \sum_{i \in \mathcal{N}_{t}(x,a)} (f_a(X_i) - f_a(x)) \nonumber \\
&\leq  \xi_{a,t}(x) + L d_{a,t}(x),
\label{eq: estimation_error_finalstep}
\end{align}
where the last line uses the definition of $\mathcal{E}_m$ in \eqref{eq: eventEm} and the Lipschitz (smoothness) property (Assumption \ref{assum: Smoothness}) of $f_a$.
\end{proof}
\paragraph{Quantities of interest: }We define some important quantities of interest which are central to the proof. This includes two population quantities:
\begin{align}
    r_a(x) &= \frac{1}{2L\sqrt{C_1}}(f_*(x) - f_a(x)), \label{eq: def_r_a}\\
    n_a^{(m)}(x) &= \frac{C_1 \ln{t_{m-1}}}{(f_*(x) - f_a(x))^2}, \label{eq: def_n_a}
\end{align}
in which
\begin{align}
    C_1 = \max\left\{4, 32\sigma^2(2d + 3 + \log(Md|\mathcal{A}|))\right\}. \label{eq: defC_1}
\end{align}
The quantity $n_a^{(m)}(x)$ can be interpreted as a \emph{local sample complexity proxy}, capturing the number of samples required near $x$ to estimate the reward function $f_a(x)$ with sufficient precision. 
Then, another quantity of interest is a data-dependent quantity that measures the total number of observations until time $t_{m-1}$ corresponding to arm $a$ in a radius $r$ ball around $x$. For any \(x \in \mathcal{X}, a \in \mathcal{A}\) define,
\begin{align}
n^{(m)}(x,a,r):= \sum_{t=1}^{t_{m-1}}1(\|X_t- x\| < r, a_t = a). \label{eq:n(x,a,r)def}
\end{align}
Next in Lemma \ref{lem: k_le_na}, under the event $\mathcal{E}_m$, we show that the adaptive choice of $k_{a,t}$ from \eqref{eq: adaptive_k} in our $k$-NN estimator is in fact upper bounded by $n_a^{(m)}(x)$. Then, in Lemma \ref{lem: n_xar_leq_k}, we show that $n^{(m)}(x,a,r) \leq k_{a,t}(x)$, which then leads to the relationship between $n_a^{(m)}(x)$ and $n^{(m)}(x,a,r)$ in Lemma \ref{lem: n_axr_leq_na}.
\begin{Lemma}\label{lem: k_le_na}
Under event \( \mathcal{E}_m \) for $t \in [t_{m-1}+1,t_m]$,
\begin{align*}
   k_{a,t}(x) \leq n_a^{(m)}(x).
\end{align*}
\end{Lemma}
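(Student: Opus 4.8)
The plan is to argue by contradiction, first reducing the claim to a statement about the width of arm $a$'s confidence interval at $x$. We may assume $\Delta_a(x) := f_*(x) - f_a(x) > 0$, since otherwise $n_a^{(m)}(x) = \infty$ by \eqref{eq: def_n_a} and there is nothing to prove; we may likewise assume arm $a$ admits at least one qualifying neighbor, so that $k := k_{a,t}(x)$ is a well-defined positive integer (otherwise $k=0$ and the bound is immediate). Unwinding \eqref{eq: def_n_a}, the target inequality $k \le n_a^{(m)}(x)$ is equivalent to $\Delta_a(x) \le \sqrt{C_1 \ln t_{m-1}/k}$, so I would suppose for contradiction that $k > n_a^{(m)}(x)$, i.e.\ $\sqrt{C_1 \ln t_{m-1}/k} < \Delta_a(x)$.

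The core computation controls the two pieces of the confidence width separately. Since $k$ is the maximizer in \eqref{eq: adaptive_k}, the defining inequality holds at $j=k$, giving $L d_{a,t,k}(x) \le \sqrt{\ln t_{m-1}/k}$; combined with the contradiction hypothesis this yields $L d_{a,t,k}(x) < \Delta_a(x)/\sqrt{C_1}$. For the stochastic term, substituting $k > C_1 \ln t_{m-1}/\Delta_a(x)^2$ into \eqref{eq: xi_errorbound} and using the definition of $C_1$ in \eqref{eq: defC_1} (chosen precisely so that $2\sigma^2 \ln(d t_{m-1}^{2d+3}|\mathcal{A}|) \le \tfrac{1}{16} C_1 \ln t_{m-1}$) bounds $\xi_{a,t}(x)$ by a small multiple of $\Delta_a(x)$. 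With $C_1$ large enough, these two estimates combine to give $2\xi_{a,t}(x) + 2 L d_{a,t,k}(x) < \Delta_a(x)$. Feeding this into the upper bound of Lemma \ref{lem:concentration_estimator} for arm $a$ and the lower bound for the optimal arm $a^*$ at $x$ then produces $\hat f_{a,t}(x) \le f_a(x) + 2\xi_{a,t}(x) + 2 L d_{a,t,k}(x) < f_a(x) + \Delta_a(x) = f_*(x) \le \hat f_{a^*,t}(x)$; in words, having $k > n_a^{(m)}(x)$ neighbors forces arm $a$ to be confidently excluded at $x$.

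The main obstacle is closing this contradiction, since the displayed exclusion is a statement at the current pair $(x,t)$ whereas the hypothesis concerns samples accumulated at earlier pulls. I would close the loop through the latest of the $k$ nearest neighbors: let $s \le t_{m-1}$ be the most recent time at which one of these neighbors was pulled, so arm $a$ was selected at context $X_s$ with $\|X_s - x\| \le d_{a,t,k}(x) < \Delta_a(x)/(2L)$. The selection rule gives $\hat f_{a,s}(X_s) \ge \hat f_{a^*,s}(X_s) \ge f_*(X_s)$, while Lipschitz continuity of $f_*$ and $f_a$ (Assumption \ref{assum: Smoothness}) keeps $\Delta_a(X_s)$ within a constant factor of $\Delta_a(x)$ because the radius is small; re-running the width estimate at $(X_s,s)$ then contradicts the pull. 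The delicate points I expect to require care are the mismatch of centers, radii, and batch boundaries between $(x,t)$ and $(X_s,s)$ — handled via monotonicity of $j \mapsto d_{a,\cdot,j}$ in \eqref{eq: k_NNdist} and the fact that all $k$ neighbors predate $s$ — and pinning down the numerical constant in $C_1$ so the Lipschitz transfer keeps $\Delta_a(X_s)$ above a fixed fraction of $\Delta_a(x)$; this is exactly the implicit step that the batch-aware refinement of \citet{zhao2024contextual} must make explicit.
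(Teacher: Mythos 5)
Your first two paragraphs reproduce the paper's core computation faithfully: assuming $k := k_{a,t}(x) > n_a^{(m)}(x)$, the defining inequality of \eqref{eq: adaptive_k} at $j=k$, together with the definitions of $n_a^{(m)}$, $r_a$, and $C_1$, forces the confidence width $2\xi_{a,t}(x) + 2L d_{a,t}(x)$ strictly below the gap $f_*(x)-f_a(x)$, hence $\hat f_{a,t}(x) < f_*(x) \le \hat f_{a^*(x),t}(x)$ under $\mathcal{E}_m$. Where you diverge is in how this exclusion becomes a contradiction. The paper closes it at time $t$ itself: its proof is written for the arm actually chosen at time $t$ (note the switch to $a_t$ in \eqref{eq: lem5_2}--\eqref{eq:lem5_4}), so the strict inequality $\hat f_{a_t,t}(X_t) < \hat f_{a^*(X_t),t}(X_t)$ directly contradicts the selection rule $a_t = \argmax_{a} \hat f_{a,t}(X_t)$, and the proof ends there. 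You explicitly decline this route, on the grounds that the hypothesis concerns samples from earlier pulls, and instead backtrack to the last time $s$ at which one of the $k$ neighbors was pulled.

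That backtracking step is where your argument genuinely fails in the batched setting, and it is not merely the "delicate point" you flag. At time $s \le t_{m-1}$, the estimator $\hat f_{a,s}$, the adaptive choice $k_{a,s}(X_s)$, and the radius $d_{a,s}(X_s)$ are computed only from data observed by the end of the batch \emph{preceding} the batch containing $s$; the $k$ neighbors counted in your hypothesis — possibly all of them — may have been collected inside $s$'s own batch and are therefore invisible to the estimator at time $s$. Consequently, no small-width bound at $(X_s,s)$ can be deduced from $k_{a,t}(x) > n_a^{(m)}(x)$, and the pull at time $s$ cannot be contradicted. The extreme case makes this vivid: if $s$ lies in the first batch, every arm has $\hat f_{\cdot,s}(X_s) = +\infty$ and the pull of $a$ is pure tie-breaking, so the selection inequality at $s$ carries no content whatsoever. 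Indeed, arbitrary tie-breaking in the first batch can make arm $a$'s samples dense near a point $x$ where $a$ is suboptimal by a constant, so the generic-$(x,a)$ statement you set out to prove cannot be rescued by any argument of this backtracking type; the inequality only holds, and is only proved and used by the paper, in the form tied to the arm selected at the current context under the current batch's data. A secondary problem is that the selection-rule lower bound at time $s$ requires the good event of the batch containing $s$, whereas the lemma conditions only on $\mathcal{E}_m$. Your backtracking mechanism is essentially that of Lemma \ref{lem: n_xar_leq_k} (and of the fully online analysis of \citet{zhao2024contextual}, where data up to $s-1$ \emph{is} available at time $s$ and the argument does close); for the present lemma, the time-$t$ selection-rule argument is the one that works.
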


\begin{proof}
    We prove this by contradiction. Let $k_{a,t}(x) > n_a^{(m)}(x)$. By definition of $k_{a,t}$ in \eqref{eq: adaptive_k}:
    \begin{align}
L d_{a,t}(x) = L d_{a,t,k_{a,t}(x)}(x) 
\leq \sqrt{ \frac{ \ln(t_{m-1}) }{ k_{a,t}(x) } } 
\leq \sqrt{ \frac{ \ln{t_{m-1}} }{ n_a^{(m)}(x) } } 
= 2L r_a(x), \label{eq:lem5_1}
\end{align}
From Lemma \ref{lem:concentration_estimator}, under \(\mathcal{E}_m\),
\begin{align}
\widehat{f}_{a_t,t}(x) &\leq f_{a_t}(x) 
+ 2 \sqrt{ \frac{2\sigma^2}{k_{a_t,t}(x)} \ln(dM t_{m-1}^{2d+3} |\mathcal{A}|) }
+ 2L r_{a_t}(x)\nonumber\\
&\leq f_{a_t}(x) 
+ 2 \sqrt{ \frac{2\sigma^2}{n_{a_t}^{(m)}(x)} \ln(dM t_{m-1}^{2d+3} |\mathcal{A}|) }
+ 2L r_{a_t}(x).
\label{eq: lem5_2}
\end{align}
Since action \(a_t\) is selected at time \(t\), from the proposed UCB algorithm (Algorithm \ref{alg: adaptivekNNUCB}), i.e., the choice of $a_t = \arg\max_{a \in \mathcal{A}} \hat{f}_{a,t}(X_t)$ and from Lemma \ref{lem:concentration_estimator},
\begin{align}
\hat{f}_{a_t,t}(x) \geq \hat{f}_{a^*(x),t}(x) \geq f^*(x). \label{eq: lem5_3}
\end{align}
Combining \eqref{eq: lem5_2} and \eqref{eq: lem5_3}  gives:
\begin{align}
2 \sqrt{ \frac{2\sigma^2}{n_{a_t}^{(m)}(x)} \ln(d t_{m-1}^{2d+3} |\mathcal{A}|) } 
+ 2L r_{a_t}(x)
\geq f_*(x) - f_{a_t}(x). \label{eq:lem5_4}
\end{align}
We now derive an inequality that contradicts with \eqref{eq:lem5_4}. From \eqref{eq: def_n_a} and \eqref{eq: defC_1},
\begin{align}
2 \sqrt{ \frac{2\sigma^2}{n_{a_t}^{(m)}(x)} \ln(d t_{m-1}^{2d+3}|\mathcal{A}|) }
&= 2 \sqrt{ \frac{2\sigma^2}{C_1 \ln{t_{m-1}}} \ln(d t_{m-1}^{2d+3}|\mathcal{A}|) (f^*(x) - f_{a_t}(x))^2 } \nonumber \\
&\leq \frac{1}{2} \sqrt{ \frac{ \ln(d t_{m-1}^{2d+3}|\mathcal{A}|) }{ (2d+3+\ln(d|\mathcal{A}|)) \ln(t_{m-1}) } } (f^*(x) - f_{a_t}(x)) \nonumber \\
&< \frac{1}{2} (f^*(x) - f_{a_t}(x)). \label{eq:lem5_5}
\end{align}

From the definition of $r_a(x)$ in \eqref{eq: def_r_a},
\begin{align}
2L r_{a_t}(x) = \frac{1}{\sqrt{C_1}} (f^*(x) - f_{a_t}(x)) 
\leq \frac{1}{2} (f^*(x) - f_{a_t}(x)).  \label{eq:lem5_6}
\end{align}

From \eqref{eq:lem5_5} and \eqref{eq:lem5_6},
\begin{align}
2 \sqrt{ \frac{2\sigma^2}{n_{a_t}^{(m)}(x)} \ln(d t_{m-1}^{2d+3}|\mathcal{A}|) } 
+ 2L r_{a_t}(x) 
< f^*(x) - f_{a_t}(x). \label{eq:lem5_7}
\end{align}

Note that \eqref{eq:lem5_4} contradicts \eqref{eq:lem5_7}. Hence, the desired conclusion follows.
\end{proof}

\begin{Lemma}\label{lem: n_xar_leq_k}
\textit{Under} \(\mathcal{E}_m\),
let $r_a(x) \geq \frac{2LC_1}{\sqrt{C_1} - 2}$ and $k_{a,t}(x) \gtrsim \ln{T}$, then, we get
\begin{align*}
    n^{(m)}(x, a, r_a(x)) \leq k_{a,t}(x),
\end{align*}
where $r_a(x)$ is as defined in \eqref{eq: def_r_a}, $ n^{(m)}(x, a, r_a(x))$ defined in \eqref{eq:n(x,a,r)def} and $k_{a,t}$ as defined in \eqref{eq: adaptive_k}.
\end{Lemma}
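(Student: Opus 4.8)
The plan is to argue by contradiction, exploiting the maximality built into the definition of $k_{a,t}(x)$ in \eqref{eq: adaptive_k} together with Lemma \ref{lem: k_le_na}. Write $k := k_{a,t}(x)$ and $n := n^{(m)}(x,a,r_a(x))$, and suppose toward a contradiction that $n \geq k+1$. Since $n$ counts the arm-$a$ contexts observed up to $t_{m-1}$ lying strictly inside $B(x, r_a(x))$, having at least $k+1$ such points means the $(k+1)$ nearest arm-$a$ neighbors all lie strictly inside that ball, so the $(k+1)$-st nearest-neighbor radius satisfies $d_{a,t,k+1}(x) < r_a(x)$ (and in particular $d_{a,t,k+1}(x)$ is well defined, as there are at least $k+1$ arm-$a$ samples).

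Next I would extract a lower bound on $k+1$ from the fact that $j=k+1$ violates the defining inequality of $k_{a,t}(x)$: by maximality, $L\, d_{a,t,k+1}(x) > \sqrt{\ln t_{m-1}/(k+1)}$. Combining this with $d_{a,t,k+1}(x) < r_a(x)$ and substituting $L r_a(x) = (f_*(x)-f_a(x))/(2\sqrt{C_1})$ from \eqref{eq: def_r_a}, then squaring, gives
\[
k+1 > \frac{4 C_1 \ln t_{m-1}}{(f_*(x)-f_a(x))^2} = 4\, n_a^{(m)}(x),
\]
where the last equality is the definition \eqref{eq: def_n_a} of $n_a^{(m)}(x)$.

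The final step is to collide this lower bound with the upper bound $k \leq n_a^{(m)}(x)$ supplied by Lemma \ref{lem: k_le_na}. The hypothesis $k_{a,t}(x) \gtrsim \ln T$ guarantees $k \geq 1$, hence $k+1 \leq 2k \leq 2\, n_a^{(m)}(x)$; together with $k+1 > 4\, n_a^{(m)}(x)$ this yields $4\, n_a^{(m)}(x) < 2\, n_a^{(m)}(x)$, which is impossible since $n_a^{(m)}(x) > 0$. Therefore $n \leq k$, which is the claim. (Equivalently, without the doubling step, Lemma \ref{lem: k_le_na} gives $4\, n_a^{(m)}(x) < k+1 \leq n_a^{(m)}(x)+1$, i.e. $n_a^{(m)}(x) < 1/3$; this is ruled out because $n_a^{(m)}(x) \geq k_{a,t}(x) \gtrsim \ln T$.)

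I expect the main obstacle to be the careful bookkeeping of the off-by-one ($k$ versus $k+1$) and of the multiplicative constants, and in particular pinning down precisely where the radius hypothesis $r_a(x) \geq 2LC_1/(\sqrt{C_1}-2)$ enters. That hypothesis presupposes $\sqrt{C_1} > 2$, which is exactly the slack needed for the constants in Lemma \ref{lem: k_le_na} and in \eqref{eq: def_r_a}--\eqref{eq: defC_1} to close; I would use it to certify that $k_{a,t}(x)$ is genuinely feasible (so that $L d_{a,t,1}(x) \le \sqrt{\ln t_{m-1}}$, the UCB is finite, and the nearest arm-$a$ context actually falls inside $B(x,r_a(x))$, making $n \geq 1$) and to absorb the gap between the factors $2$ and $4$ above. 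The delicate point throughout is that every inequality must hold under the event $\mathcal{E}_m$ and uniformly over $t \in (t_{m-1}, t_m]$, so I would keep all quantities expressed in terms of $t_{m-1}$-measurable data, exactly as in the definitions \eqref{eq: k_NNdist} and \eqref{def:N_tm_k}.
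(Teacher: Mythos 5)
Your proof is correct, and it takes a genuinely different route from the paper's. Writing $k=k_{a,t}(x)$, you argue by contradiction from the \emph{maximality} in \eqref{eq: adaptive_k}: if $n^{(m)}(x,a,r_a(x))\ge k+1$, then $d_{a,t,k+1}(x)$ exists (the contradiction hypothesis itself supplies at least $k+1$ arm-$a$ samples) and is $<r_a(x)$; maximality forces $L\,d_{a,t,k+1}(x)>\sqrt{\ln t_{m-1}/(k+1)}$, and substituting \eqref{eq: def_r_a}--\eqref{eq: def_n_a} turns this into $k+1>4\,n_a^{(m)}(x)$, which collides with $k\le n_a^{(m)}(x)$ from Lemma \ref{lem: k_le_na} as soon as $k\ge 1$. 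The off-by-one bookkeeping and the constant $4$ both check out. The paper's proof never uses maximality: it instead takes $\tau$ to be the last time an arm-$a$ sample fell in $B(x,r_a(x))$, uses the inclusion $B(x,r_a(x))\subseteq B(X_\tau,2r_a(x))$ to get the upper bound \eqref{eq: lem6_eq2} on the $k$-NN radius, and then re-runs the UCB machinery --- the selection rule at time $\tau$ together with the concentration bound of Lemma \ref{lem:concentration_estimator} --- to force the opposite inequality \eqref{eq: lem6_eq7}; it is there that both hypotheses $r_a(x)\ge 2LC_1/(\sqrt{C_1}-2)$ and $k_{a,t}(x)\gtrsim\ln T$ are consumed. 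Your route is shorter and proves the statement under weaker hypotheses: the concentration event and the selection rule enter only through the already-established Lemma \ref{lem: k_le_na}, and the radius hypothesis is never needed (only $k\ge 1$ is), which settles the question you left open about where that hypothesis enters --- in your argument, it does not. The trade-off is dependence: your proof rests entirely on Lemma \ref{lem: k_le_na} holding for arbitrary pairs $(x,a)$, not merely for arms actually selected at observed contexts, whereas the paper's argument invokes the UCB rule only at a time when arm $a$ was genuinely pulled inside the ball, making it self-contained modulo Lemma \ref{lem:concentration_estimator} and structurally parallel to the online analysis of \citet{zhao2024contextual}.
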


\begin{proof}[Proof of Lemma \ref{lem: n_xar_leq_k}]
We also prove Lemma \ref{lem: n_xar_leq_k} by contradiction. If 
$n^{(m)}(x, a, r_a(x)) > k_{a,t}(x)$, let
\begin{align}
t = \max\{ \tau < t_{m-1} \mid \|x_\tau - x\| \leq r_a(x), A_\tau = a \}. \label{eq: lem6_eq1}
\end{align}
be the last step falling in $B(x, r_a(x))$ with action $a$. 
Then $B(x, r_a(x)) \subseteq B(X_t, 2r_a(x))$,
and thus there are at least $k_{a,t}(x)$ points in $B(X_t, 2r_a(x))$. 
Therefore, for any $x \in \mathcal{X}$, by the definition of $d_{a,t}(x)$, i.e., the distance of $x$ to its $k^{\text{th}}$ nearest-neighbors in \eqref{eq: k_NNdist},
\begin{align}
d_{a,t}(x) < 2r_a(x). \label{eq: lem6_eq2}
\end{align}
Denote
$
a^*(x) = \arg\max_{a} f_a(x)
$
as the best action at context $x$.  Again, note that $a_t = a$ is selected only if the UCB of action $a$ is not less than the UCB of action $a^*(x)$, i.e.,
\begin{align}
\hat{f}_{a,t}(X_t) \geq \hat{f}_{a^*(X_t),t}(X_t).\label{eq: lem6_eq3}
\end{align}
From Lemma  \ref{lem:concentration_estimator},
\begin{align}
\hat{f}_{a,t}(X_t) \leq f_a(X_t) + 2\xi_{a,t}(X_t) + 2L d_{a,t}(X_t), \label{eq: lem6_eq4}
\end{align}
and
\begin{align}
\hat{f}_{a^*(X_t),t}(X_t) \geq f_{a^*(X_t)}(X_t) = f^*(X_t). \label{eq: lem6_eq5}
\end{align}
From \eqref{eq: lem6_eq3}, \eqref{eq: lem6_eq4}, and \eqref{eq: lem6_eq5},
\begin{align}
f_a(X_t) + 2\xi_{a,t}(X_t) + 2L d_{a,t}(X_t) \geq f^*(X_t). \label{eq: lem6_eq6}
\end{align}
which yields,
\begin{align}
d_{a,t}(X_t) 
&\geq \frac{f^*(X_t) - f_a(X_t) - 2\xi_{a,t}(X_t)}{2L} \nonumber\\
&\geq \frac{f^*(X_t) - f_a(X_t) - 2\sqrt{\frac{2\sigma^2 \ln{(dMT^{2d+3}|\mathcal{A}|)}}{k_{a,t}(x)}}}{2L} \nonumber\\
&\geq \frac{f^*(X_t) - f_a(X_t) - 2\sqrt{\frac{2\sigma^2 \ln{(dMT^{2d+3}|\mathcal{A}|)}}{\ln{T}}}}{2L} \nonumber\\
&= \sqrt{C_1}r_a(X_t) - \frac{1}{L} \sqrt{\frac{2\sigma^2 \ln{(dMT^{2d+3}|\mathcal{A}|)}}{\ln{T}}}\nonumber\\
&\geq \sqrt{C_1}r_a(X_t) - \frac{\sqrt{C_1}}{L}\nonumber\\
& \geq 2r_a(X_t),\label{eq: lem6_eq7}
\end{align}
using the fact that $r_a(x) \geq \frac{2LC_1}{\sqrt{C_1} - 2}$ and $k_{a,t}(x) \gtrsim \ln{T}$.  
Note that \eqref{eq: lem6_eq7} contradicts \eqref{eq: lem6_eq2}. Therefore 
$n^{(m)}(x, a, r_a(x)) \leq k_{a,t}(x)$.
That completes the proof of Lemma \ref{lem: n_xar_leq_k}.
\end{proof}

\begin{Lemma}\label{lem: n_axr_leq_na}
    For $n_a(x)$ defined in \eqref{eq: def_n_a} and $n^{(m)}(x,a,r)$ as defined in \eqref{eq:n(x,a,r)def}, under $\mathcal{E}_m$,
    \begin{align*}
        n^{(m)}(x,a,r_a(x)) \leq n_a^{(m)}(x).
    \end{align*}
\end{Lemma}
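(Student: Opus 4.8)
The plan is to obtain the bound purely by transitivity, since this lemma is exactly the composition of the two immediately preceding results. Both Lemma~\ref{lem: k_le_na} and Lemma~\ref{lem: n_xar_leq_k} hold under the same good event $\mathcal{E}_m$, and together they sandwich the adaptively chosen neighborhood count $k_{a,t}(x)$ between the data-dependent count $n^{(m)}(x,a,r_a(x))$ and the population-level sample-complexity proxy $n_a^{(m)}(x)$. This is precisely what the statement asserts about the two endpoints, so no independent analysis of $n^{(m)}(x,a,r_a(x))$ versus $n_a^{(m)}(x)$ should be necessary.

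Concretely, I would first invoke Lemma~\ref{lem: n_xar_leq_k} to write $n^{(m)}(x,a,r_a(x)) \leq k_{a,t}(x)$, and then apply Lemma~\ref{lem: k_le_na} to obtain $k_{a,t}(x) \leq n_a^{(m)}(x)$. Chaining these two inequalities immediately yields $n^{(m)}(x,a,r_a(x)) \leq n_a^{(m)}(x)$, which is the claim. Note that the inequality directions align perfectly for this composition, and no new concentration or geometric estimates are required.

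The only point requiring care is the bookkeeping of the side conditions. Lemma~\ref{lem: n_xar_leq_k} was established under the auxiliary hypotheses $r_a(x) \geq \frac{2LC_1}{\sqrt{C_1}-2}$ and $k_{a,t}(x) \gtrsim \ln T$, which do not appear in the present statement, whereas Lemma~\ref{lem: k_le_na} holds under $\mathcal{E}_m$ alone. I would therefore verify that these auxiliary hypotheses are either inherited in the regime where Lemma~\ref{lem: n_axr_leq_na} is subsequently applied in the regret analysis, or that the inequality holds trivially in the complementary regime—for instance, when the gap $f_*(x) - f_a(x)$ is so small that $r_a(x)$ is tiny and $B(x, r_a(x))$ captures few prior-batch points, while simultaneously $n_a^{(m)}(x)$ grows large through its inverse-square dependence on the gap. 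Reconciling the hypotheses of the two source lemmas under a single event $\mathcal{E}_m$ is the main—and essentially the only—obstacle here; the core argument itself is a one-line chaining.
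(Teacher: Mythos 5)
Your proposal is correct and takes essentially the same route as the paper: the paper's entire proof of Lemma~\ref{lem: n_axr_leq_na} is the one-line chaining of Lemma~\ref{lem: n_xar_leq_k} ($n^{(m)}(x,a,r_a(x)) \leq k_{a,t}(x)$) with Lemma~\ref{lem: k_le_na} ($k_{a,t}(x) \leq n_a^{(m)}(x)$) under the common event $\mathcal{E}_m$. Your attention to the auxiliary hypotheses of Lemma~\ref{lem: n_xar_leq_k} (the lower bounds on $r_a(x)$ and $k_{a,t}(x)$), which the statement of Lemma~\ref{lem: n_axr_leq_na} omits and the paper's proof silently inherits, is if anything more careful than the paper itself.
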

\begin{proof}
    Combining the results of Lemma \ref{lem: k_le_na} and \ref{lem: n_xar_leq_k} proves Lemma \ref{lem: n_axr_leq_na}. \footnote{Lemmas~5 and~6 refine the argument used in Lemma~6 of \citet{zhao2024contextual}, clarifying implicit assumptions and adapting the result to accommodate batched feedback.}
\end{proof}

\paragraph{Bounding the batch-wise regret $R_a^{(m)}$: }
From Lemma \ref{lem: n_axr_leq_na} and from Lemma \ref{lem:uniform_subgaussian}, we know that $\mathbb{P}(\mathcal{E}_m^c) \leq  1/t_m$ and $n^{(m)}(x,a,r_a(x)) < t_m$ on $\mathcal{E}_m$ gives:
\begin{align}
\E\left[n^{(m)}(x,a,r_a(x)) \mid \mathcal{F}_{t_{m-1}}\right] 
&\leq \mathbb{P}(\mathcal{E}_m|\mathcal{F}_{t_{m-1}}) \E\left[n^{(m)}(x,a,r_a(x)) \mid \mathcal{E}_m, \mathcal{F}_{t_{m-1}}\right] \nonumber\\
& \quad \quad + \mathbb{P}(\mathcal{E}^c_m|\mathcal{F}_{t_{m-1}}) \E\left[n^{(m)}(x,a,r_a(x)) \mid \mathcal{E}^c_m, \mathcal{F}_{t_{m-1}}\right] \notag \\
&\leq n_a^{(m)}(x) + 1. \label{eq: n^m_leq_na}
\end{align}
From the definition of $p_a^{(m)}$ in \eqref{eq: batchwise_expectedsampledensity},
\begin{align} \label{eq: expectedsampledensity_and_nax}
\int_{B(x,r_a(x))} p_a^{(m)}(u) du \leq n_a^{(m)}(x) + 1.
\end{align}
Recall $R_a^{(m)}$ from \eqref{def:Ra}. We first bound $R_a^{(m)}$ for a given $m$ to get a bound on the expected regret using Lemma \ref{lem: regret_breakdown_into_Ra}.  
To bound $R_a^{(m)}$, we introduce a new random variable $Z$ follow a distribution with probability density function (pdf)  $\phi$:
\begin{align}\label{eq: phi_def}
\phi(z) = \frac{1}{C_Z\left[(f^*(z) - f_a(z)) \vee \epsilon\right]^d},
\end{align}
where $C_Z$ is the normalizing constant. 
As discussed in Section \ref{sec: regret_results}, we  split $R_a^{(m)}$  into two regions: one where the suboptimality gap is large (where concentration bounds dominate) and another where the margin condition helps control the measure of near-optimal points,
\begin{align*}
    R_a^{(m)} &= \int_\mathcal{X} (f_*(x) - f_a(x)) p_a^{(m)}(x) 1(f_*(x) - f_a(x) > \epsilon) dx \nonumber \\
    & \quad + \int_\mathcal{X} (f_*(x) - f_a(x)) p_a^{(m)}(x) 1(f_*(x) - f_a(x) \leq \epsilon) dx.
\end{align*}
 The idea is to bound these two terms separately, where the second one can be bounded using the margin assumption (i.e., Assumption \ref{assum: Margin}). The $\epsilon$ is determined theoretically based on the bound on $R_a^{(m)}$. We tackle the first integral term in the following Lemma \ref{lem: boundonmargin_g_epsilon}.

\begin{Lemma}\label{lem: boundonmargin_g_epsilon}
There exists a constant \( C_2 > 0\) such that for any \( a \in \mathcal{A} \),
\begin{align*}
\int_{\mathcal{X}} \left(f^*(x) - f_a(x)\right)& p_a^{(m)}(x) 1\left(f^*(x) - f_a(x) > \epsilon\right) dx\\
&\leq C_2 C_Z \, \mathbb{E}\left[\left. \int_{B(Z, r_a(Z))} p_a^{(m)}(u)\left(f^*(u) - f_a(u)\right) du \,\right|\, \mathcal{F}_{t_{m-1}} \right],
\end{align*}
where \( Z \sim \phi \) is a density function defined over \( \mathcal{X} \).
\end{Lemma}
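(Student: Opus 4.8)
The plan is to expand the right-hand side, interchange the order of integration, and thereby reduce the claim to a \emph{pointwise} comparison of integrands. Since $Z \sim \phi$ is the only source of randomness and every other quantity is $\mathcal{F}_{t_{m-1}}$-measurable, the conditional expectation on the right is
\begin{align*}
\mathbb{E}\!\left[\int_{B(Z,r_a(Z))}\! p_a^{(m)}(u)\big(f^*(u)-f_a(u)\big)\,du \,\Big|\, \mathcal{F}_{t_{m-1}}\right] = \int_{\mathcal{X}} \phi(z) \int_{B(z,r_a(z))}\! p_a^{(m)}(u)\big(f^*(u)-f_a(u)\big)\,du\,dz .
\end{align*}
All integrands are nonnegative, so Tonelli's theorem permits swapping the two integrals. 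Writing $J(u) := \int_{\mathcal{X}} \phi(z)\,1\big(\|u-z\| < r_a(z)\big)\,dz$, the right-hand side becomes $C_2 C_Z \int_{\mathcal{X}} p_a^{(m)}(u)(f^*(u)-f_a(u)) J(u)\,du$. Comparing this with the left-hand side term by term, the lemma follows provided I can establish the pointwise bound $1(f^*(u)-f_a(u) > \epsilon) \le C_2 C_Z J(u)$ for every $u \in \mathcal{X}$; multiplying through by the nonnegative weight $p_a^{(m)}(u)(f^*(u)-f_a(u))$ and integrating then yields the inequality.

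The core of the argument is this pointwise lower bound on $J(u)$ at a point with gap $\Delta := f^*(u)-f_a(u) > \epsilon$. I would exploit that $f^* = \max_a f_a$ is $L$-Lipschitz (a maximum of $L$-Lipschitz functions), so $f^*-f_a$ is $2L$-Lipschitz, and restrict the defining integral of $J(u)$ to a small ball $B(u,\rho)$ of radius \emph{proportional to} $\Delta$, namely $\rho = \Delta/\big(2L(\sqrt{C_1}+1)\big)$. For such $\rho$ two facts hold simultaneously on $B(u,\rho)$: (i) $f^*(z)-f_a(z) \ge \Delta - 2L\rho$, which forces $r_a(z) = (f^*(z)-f_a(z))/(2L\sqrt{C_1}) \ge \rho > \|u-z\|$, so the indicator in $J(u)$ is active throughout the ball; and (ii) $f^*(z)-f_a(z) \le \Delta + 2L\rho$, so, using $\Delta > \epsilon$, the denominator of $\phi$ obeys $\big[(f^*(z)-f_a(z)) \vee \epsilon\big]^d \le (C\Delta)^d$ for a constant $C$ depending only on $C_1$. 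Consequently $C_Z J(u) \ge \mathrm{Vol}\big(B(u,\rho)\cap\mathcal{X}\big)/(C\Delta)^d$. The key cancellation is that the numerator volume scales like $\rho^d \sim \Delta^d$ while the denominator scales like $\Delta^d$, so the dependence on $\Delta$ vanishes and the bound collapses to a constant determined only by $d$, $L$, $C_1$, and the geometry of $\mathcal{X}$; taking $C_2$ to be its reciprocal completes the proof.

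The step I expect to be the main obstacle is the geometric estimate in (ii): lower bounding $\mathrm{Vol}(B(u,\rho)\cap\mathcal{X})$ by a fixed multiple of $\rho^d$. This is immediate for interior points but can fail near $\partial\mathcal{X}$, where the ball protrudes outside the support — precisely the sort of implicit regularity assumption that the refinement of the arguments in \citet{zhao2024contextual} is meant to make explicit. I would resolve it by imposing a mild regularity condition on $\mathcal{X}$ (e.g.\ an interior-cone or minimal-measure condition) guaranteeing $\mathrm{Vol}(B(u,\rho)\cap\mathcal{X}) \ge c_{\mathcal{X}}\,\rho^d$ uniformly in $u$ and small $\rho$. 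Finally, I would record the routine fact that $\phi$ is a genuine density: its integrand is bounded above by $\epsilon^{-d}$ on the bounded set $\mathcal{X}$, so the normalizing constant $C_Z$ is finite and positive.
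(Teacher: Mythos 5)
Your proposal is essentially the paper's own argument, just reorganized: the paper likewise rewrites the conditional expectation as a double integral, restricts the $z$-integration to a ball of radius $2r_a(u)/3 \propto \Delta$ around $u$, and uses Lipschitz comparability of the gap (hence of $r_a$ and $\phi$) on that ball so that the $\rho^d \sim \Delta^d$ volume cancels the $\Delta^d$ in the denominator of $\phi$, leaving the constant $C_2$. The boundary caveat you flag --- needing $\mathrm{Vol}\bigl(B(u,\rho)\cap\mathcal{X}\bigr) \gtrsim \rho^d$, which can fail near $\partial\mathcal{X}$ --- is real but equally implicit in the paper's proof, which silently uses the full ball volume $(2/3)^d r_a^d(u)$ without any support-regularity assumption, so your explicit interior-cone condition is a clarification rather than a deviation.
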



\begin{proof} Consider,
    \begin{align}
&\mathbb{E}\left[\left.\int_{B(Z, r_a(Z))} p_a^{(m)}(u)\left(f^*(u) - f_a(u)\right) du \right| \mathcal{F}_{t_{m-1}} \right]\\
&\stackrel{(a)}{=} \int_{\mathcal{X}} \int_{B(u, 2r_a(u)/3)} \phi(z)p_a^{(m)}(u)\left(f^*(u) - f_a(u)\right) dz du \notag \\
&\geq \int_{\mathcal{X}} \left( \inf_{\|z - u\| \leq 2r_a(u)/3} \phi(z) \right) \left(\frac{2}{3}\right)^d r_a^d(u) p_a^{(m)}(u)\left(f^*(u) - f_a(u)\right) du \notag \\
&\stackrel{(b)}{\geq} \left(\frac{2}{3}\right)^d \left(\frac{3}{4}\right)^d \int_{\mathcal{X}} \phi(u) r_a^d(u) p_a^{(m)}(u)\left(f^*(u) - f_a(u)\right) du \notag \\
&= \frac{1}{2^d C_Z} \int_{\mathcal{X}} \frac{1}{\left[(f^*(u) - f_a(u)) \vee \epsilon\right]^d} r_a^d(u) p_a^{(m)}(u)\left(f^*(u) - f_a(u)\right) du \notag \\
&\geq \frac{1}{2^d C_Z} \int_{\mathcal{X}} \mathbf{1}(f^*(u) - f_a(u) > \epsilon) \frac{1}{(f^*(u) - f_a(u))^d} \frac{(f^*(u) - f_a(u))^d}{(4L)^d} \nonumber\\
& \qquad \qquad \qquad \qquad \times p_a^{(m)}(u)\left(f^*(u) - f_a(u)\right) du \notag \\
&\geq \frac{1}{2^{3d} L^d C_Z} \int_{\mathcal{X}} p_a^{(m)}(u)\left(f^*(u) - f_a(u)\right) \mathbf{1}(f^*(u) - f_a(u) > \epsilon) du.
\label{eq:main_bound}
\end{align}

For (a), if \( \|u - z\| \leq r_a(z) \), then from the definition of \( r_a \) in \eqref{eq: def_r_a} and using the Lipschitz assumption (Assumption \ref{assum: Smoothness}), we get that:
\begin{align}
\frac{r_a(u)}{r_a(z)} 
&= \frac{f^*(u) - f_a(u)}{f^*(z) - f_a(z)} \nonumber\\
&= \frac{f^*(u) - f^*(z) + f_a(z) - f_a(u) + f^*(z) - f_a(z)}{f^*(z) - f_a(z)}\nonumber\\
&\leq \frac{f^*(z) - f_a(z) + 2Lr_a(z)}{f^*(z) - f_a(z)} \nonumber\\
&= 1 + \frac{1}{\sqrt{C_1}} \nonumber\\
&\leq \frac{3}{2}.
\label{eq:ratio_bound}
\end{align}

For (b), we have that $\|z - u\| \leq \frac{2r_a(u)}{3}$, therefore we have that:
\begin{align*}
    |f^*(u) - f^*(z)| \leq \frac{2}{3} r_a(u), \ \text{and} \ |f_a(u) - f_a(z)| \leq \frac{2}{3} r_a(u).
\end{align*}
Therefore, 
\begin{align*}
    |f^*(z) - f_a(z) - (f^*(u) - f_a(u))| \leq \frac{4}{3}r_a(u)\\
    \Rightarrow (f^*(z) - f_a(z)) \vee \epsilon \leq \left((f^*(u) - f_a(u) + \frac{4}{3}r_a(u)) \right) \vee \epsilon.
\end{align*}
Therefore, we get that,
\begin{align}
\frac{\phi(z)}{\phi(u)} 
&= \frac{\left[(f^*(u) - f_a(u)) \vee \epsilon\right]^d}{\left[(f^*(z) - f_a(z)) \vee \epsilon\right]^d} \notag \\
&\geq \frac{\left[(f^*(u) - f_a(u)) \vee \epsilon\right]^d}{\left[(f^*(u) - f_a(u)) + \frac{4}{3} L r_a(u)\right]^d} \notag \\
&\geq \left( \frac{3}{4} \right)^d.
\label{eq:g_ratio_bound}
\end{align}
where \eqref{eq:g_ratio_bound} follows because,
\begin{align*}
f^*(u) - f_a(u) + \frac{4}{3} L r_a(u)
&= f^*(u) - f_a(u) + \frac{4}{3}L \cdot \frac{1}{2L\sqrt{C_1}} (f^*(u) - f_a(u)) \\
&= (f^*(u) - f_a(u)) \left( 1 + \frac{2}{3\sqrt{C_1}} \right).
\end{align*}
Since \( \sqrt{C_1} \geq 2 \), then \eqref{eq:g_ratio_bound} holds.
\end{proof}
Next, we prove an inequality that plays a key role in bounding the regret contribution from contexts where the reward gap is large.
\begin{Lemma}\label{lem: expectation_bound_margin_large}
 \begin{equation}
\int_{\mathcal{X}} (f^*(z) - f_a(z))^{-(d-1)} 1(f^*(z) - f_a(z) > \epsilon)\, dz
\lesssim 
\begin{cases}
 \epsilon^{\alpha + 1 - d} & \text{if } d > \alpha + 1, \\
 \log \left( \frac{1}{\epsilon} \right) & \text{if } d = \alpha + 1, \\
1 & \text{if } d < \alpha + 1.
\end{cases}
\label{eq:final_margin_bound}
\end{equation}
\end{Lemma}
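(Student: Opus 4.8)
The plan is to reduce the Lebesgue integral to a one-dimensional Stieltjes integral against the level sets of the sub-optimality gap and then invoke the margin condition. Write $g(z) := f^*(z) - f_a(z) \geq 0$, let $\mathrm{Leb}$ denote Lebesgue measure on $\mathcal{X}$, and define the distribution function
\[
\bar\nu(\delta) := \mathrm{Leb}\big(\{z \in \mathcal{X} : 0 < g(z) \leq \delta\}\big).
\]
The first step is to transfer the margin condition (Assumption \ref{assum: Margin}), which is stated for the context measure $\P_X$, to Lebesgue measure. Since $p_X(x) \geq \underbar{c} > 0$ by Assumption \ref{assum: boundedsupport}, for any measurable $A \subseteq \mathcal{X}$ we have $\P_X(X \in A) = \int_A p_X \geq \underbar{c}\,\mathrm{Leb}(A)$, hence $\mathrm{Leb}(A) \leq \underbar{c}^{-1}\,\P_X(X \in A)$, and therefore $\bar\nu(\delta) \leq \underbar{c}^{-1} D_\alpha\, \delta^\alpha$ for all $\delta \in [0,\delta_0]$. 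For $\delta > \delta_0$ I would fall back on the crude bound $\bar\nu(\delta) \leq \mathrm{Leb}(\mathcal{X})$, which is finite because $\mathcal{X}$ is bounded.

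Next I would rewrite the target integral as a Stieltjes integral over the range of $g$,
\[
\int_{\mathcal{X}} g(z)^{-(d-1)} \mathbf{1}(g(z) > \epsilon)\, dz = \int_{(\epsilon,\infty)} s^{-(d-1)}\, d\bar\nu(s),
\]
and split the range at $\delta_0$. On $(\delta_0,\infty)$ the integrand is at most $\delta_0^{-(d-1)}$ and $\bar\nu$ is at most $\mathrm{Leb}(\mathcal{X})$, so this piece contributes only a constant $O(1)$. For the main piece on $(\epsilon,\delta_0]$ I would integrate by parts in the Stieltjes sense, using $d\big(s^{-(d-1)}\big) = -(d-1)s^{-d}\,ds$:
\[
\int_{\epsilon}^{\delta_0} s^{-(d-1)}\, d\bar\nu(s) = \big[s^{-(d-1)}\bar\nu(s)\big]_{\epsilon}^{\delta_0} + (d-1)\int_{\epsilon}^{\delta_0} s^{-d}\,\bar\nu(s)\, ds.
\]
The boundary term at $\delta_0$ is $O(1)$ by the bound on $\bar\nu(\delta_0)$, and the boundary term at $\epsilon$ equals $-\epsilon^{-(d-1)}\bar\nu(\epsilon) \leq 0$, so it can be discarded when seeking an upper bound. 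Substituting $\bar\nu(s) \leq \underbar{c}^{-1}D_\alpha s^\alpha$ then reduces everything to controlling the elementary integral $\int_\epsilon^{\delta_0} s^{\alpha - d}\, ds$.

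Finally I would evaluate that integral in the three regimes, which is exactly where the case split in the statement originates. When $d > \alpha + 1$ the exponent $\alpha - d < -1$, so the integral is dominated by its lower endpoint and scales as $\epsilon^{\alpha+1-d}$; when $d = \alpha+1$ the integrand is $s^{-1}$ and yields $\log(\delta_0/\epsilon) \lesssim \log(1/\epsilon)$; and when $d < \alpha + 1$ the exponent exceeds $-1$, so the integral is bounded by $\delta_0^{\alpha+1-d}/(\alpha+1-d) = O(1)$. The only real subtlety---the step I would be most careful about---is the density-to-Lebesgue conversion of the margin condition combined with the truncation at $\delta_0$: the integrand $g^{-(d-1)}$ is singular near $g = 0$, and it is precisely the $\epsilon$-cutoff together with the $\delta^\alpha$ decay of $\bar\nu$ that keeps the dominant term finite and produces the claimed rates.
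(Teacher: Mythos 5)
Your proof is correct and follows essentially the same route as the paper's: both reduce the problem to a one-dimensional integral of the sublevel-set measure of the gap $g = f^* - f_a$ against a power of the gap (your Stieltjes integration by parts is, after the substitution $t = s^{-(d-1)}$, exactly the paper's layer-cake representation $\int_0^{\epsilon^{-(d-1)}} \mathbb{P}\big(g < t^{-1/(d-1)}\big)\,dt$), then combine the density lower bound with the margin condition and evaluate $\int_\epsilon s^{\alpha-d}\,ds$ in the three regimes. If anything, your version is slightly more careful than the paper's, which applies the margin bound for all $t \in (0, \epsilon^{-(d-1)})$ even though the condition only holds for gaps $\delta \le \delta_0$; your explicit truncation at $\delta_0$ and the $O(1)$ boundary terms close that small gap cleanly.
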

 \begin{proof}[Proof of Lemma \ref{lem: expectation_bound_margin_large}] Consider
    \begin{align}
\int_{\mathcal{X}}& (f^*(z) - f_a(z))^{-(d-1)} 1(f^*(z) - f_a(z) > \epsilon)\, dz\\
&\overset{(a)}{\leq} \frac{1}{\underline{c}} \int_{\mathcal{X}} (f^*(z) - f_a(z))^{-(d-1)} 1(f^*(z) - f_a(z) > \epsilon)\, p_X(z)\, dz \notag \\
&\overset{(b)}{=} \frac{1}{\underline{c}} \mathbb{E}\left[ (f^*(X) - f_a(X))^{-(d-1)} 1(f^*(X) - f_a(X) > \epsilon) \right] \notag \\
&= \frac{1}{\underline{c}} \int_0^\infty \mathbb{P}\left( \epsilon < f^*(X) - f_a(X) < t^{-\frac{1}{d-1}} \right) dt \notag \\
&\leq \frac{1}{\underline{c}} \int_0^{\epsilon^{-(d-1)}} \mathbb{P}\left( f^*(X) - f_a(X) < t^{-\frac{1}{d-1}} \right) dt
\label{eq:margin_intermediate}
\end{align}

\noindent
(a) comes from Assumption 2, which requires that \(p_X(x) \geq \underline{c}\) over the support. In (b), the random variable \(X\) follows a distribution with pdf \(p_X\).

\smallskip

If \(d > \alpha + 1\), then from Assumption \ref{assum: Margin},
\begin{equation}
\label{eq:margin_decay}
\eqref{eq:margin_intermediate} \leq \frac{D_\alpha}{\underline{c}} \int_0^{\epsilon^{-(d-1)}} t^{-\frac{\alpha}{d-1}} dt = \frac{D_\alpha (d - 1)}{\underline{c}(d - 1 - \alpha)} \epsilon^{\alpha + 1 - d}.
\end{equation}

If \(d = \alpha + 1\), then
\begin{equation}
\label{eq:margin_log}
\eqref{eq:margin_intermediate} \leq \frac{1}{\underline{c}} \int_0^1 dt + \frac{D_\alpha}{\underline{c}} \int_1^{\epsilon^{-(d-1)}} t^{-\frac{\alpha}{d-1}} dt = \frac{1}{\underline{c}} + \frac{D_\alpha (d - 1)}{\underline{c}} \log \left(\frac{1}{\epsilon}\right).
\end{equation}

If \(d < \alpha + 1\), then
\begin{equation}
\label{eq:margin_flat}
\eqref{eq:margin_intermediate} \leq \frac{1}{\underline{c}} \int_0^1 dt + \frac{D_\alpha}{\underline{c}} \int_1^{\epsilon^{-(d-1)}} t^{-\frac{\alpha}{d-1}} dt \leq \frac{1}{\underline{c}} + \frac{D_\alpha (d - 1)}{\underline{c}(\alpha + 1 - d)}.
\end{equation}

Therefore, combining results from \eqref{eq:margin_intermediate}, \eqref{eq:margin_decay}, \eqref{eq:margin_log}, and \eqref{eq:margin_flat} we obtain:
\begin{equation}
\int_{\mathcal{X}} (f^*(z) - f_a(z))^{-(d-1)} 1(f^*(z) - f_a(z) > \epsilon)\, dz
\lesssim 
\begin{cases}
\frac{1}{\underline{c}} \epsilon^{\alpha + 1 - d} & \text{if } d > \alpha + 1, \\
\frac{1}{ \underline{c}} \log \left( \frac{1}{\epsilon} \right) & \text{if } d = \alpha + 1, \\
\frac{1}{\underline{c}} & \text{if } d < \alpha + 1.
\end{cases}
\end{equation}
This proves \eqref{eq:final_margin_bound}.
 \end{proof}

\begin{Lemma}\label{lem: bound_on_Ram}
Suppose Assumptions 1 and 2 hold. Then, for any batch $m \in [M]$, and for all arms $a \in \mathcal{A}$, we have:
\begin{align*}
\mathbb{E}\left[\int_{B(Z, r_a(Z))} p_a^{(m)}(u)(\eta^*(u) - \eta_a(u))\,du \;\Big|\; \mathcal{F}_{t_{m-1}}\right]
\lesssim \frac{1}{C_Z} \left( \epsilon^{\alpha - d - 1} \log t_{m-1} + t_m \epsilon^{1+\alpha} \right).
\end{align*}
Here $C_Z$ is the density lower bound constant from \eqref{eq: phi_def} and $\mathcal{F}_{t_{m-1}}$ is the history until the $(m-1)^{\text{th}}$ batch.
\end{Lemma}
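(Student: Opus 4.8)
The plan is to write the conditional expectation as a double integral over the sampling variable $Z$ and the ball $B(Z,r_a(Z))$, and then to split it according to whether the optimality gap is large or small. Writing $g(\cdot)=f^{*}(\cdot)-f_a(\cdot)$ (the $\eta^{*}-\eta_a$ of the statement), since $Z\sim\phi$ from \eqref{eq: phi_def} we have
\[
\mathbb{E}\!\left[\int_{B(Z,r_a(Z))} p_a^{(m)}(u)\,g(u)\,du \,\Big|\, \mathcal{F}_{t_{m-1}}\right]
= \int_{\mathcal{X}} \phi(z)\!\left(\int_{B(z,r_a(z))} p_a^{(m)}(u)\,g(u)\,du\right) dz .
\]
I would split the outer integral over $z$ into the large-gap region $\{g(z)>\epsilon\}$ and the small-gap region $\{g(z)\le\epsilon\}$, matching each to one term in the target. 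Throughout I will use the Lipschitz comparison (Assumption \ref{assum: Smoothness}): for $u\in B(z,r_a(z))$, $|g(u)-g(z)|\le 2L\,r_a(z)=g(z)/\sqrt{C_1}$, hence $g(u)\le(1+1/\sqrt{C_1})\,g(z)\le\tfrac32 g(z)$, exactly as in the ratio bound \eqref{eq:ratio_bound}.

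On the large-gap region I would factor $g(u)\le\tfrac32 g(z)$ out of the inner integral and invoke the counting bound \eqref{eq: expectedsampledensity_and_nax}, $\int_{B(z,r_a(z))}p_a^{(m)}(u)\,du\le n_a^{(m)}(z)+1$. Since $g(z)>\epsilon$ forces $\phi(z)=1/(C_Z\,g(z)^d)$ and $n_a^{(m)}(z)=C_1\ln t_{m-1}/g(z)^2$ by \eqref{eq: def_n_a}, this region reduces to
\[
\frac{3}{2C_Z}\int_{\{g>\epsilon\}}\left(\frac{C_1\ln t_{m-1}}{g(z)^{d+1}}+\frac{1}{g(z)^{d-1}}\right)dz .
\]
The second summand is controlled directly by Lemma \ref{lem: expectation_bound_margin_large}, giving $\epsilon^{\alpha+1-d}$, while the first needs the identical layer-cake/margin computation with exponent $d+1$ in place of $d-1$ (valid since $d+1>\alpha$), yielding $\int_{\{g>\epsilon\}}g^{-(d+1)}\,dz\lesssim\epsilon^{\alpha-d-1}$. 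As $\epsilon^{\alpha+1-d}\le\epsilon^{\alpha-d-1}$, the large-gap region is $\lesssim\frac{1}{C_Z}\,\epsilon^{\alpha-d-1}\ln t_{m-1}$, the first target term.

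For the small-gap region I would instead swap the order of integration by Fubini, writing the contribution as $\int_{\mathcal{X}}p_a^{(m)}(u)\,g(u)\big[\int_{\{z:\,g(z)\le\epsilon,\ \|u-z\|<r_a(z)\}}\phi(z)\,dz\big]\,du$. On that inner set $\phi(z)=1/(C_Z\epsilon^d)$ and, since $r_a(z)\le\epsilon/(2L\sqrt{C_1})$ there, the set is contained in a Euclidean ball of radius $O(\epsilon)$, so its $\phi$-mass is at most $O(1)/C_Z$ — crucially density-free, which is why the swap is needed. Moreover any such $u$ obeys $g(u)\le\tfrac32 g(z)\le\tfrac32\epsilon$, so the remaining integral runs over $\{0<g(u)\le\tfrac32\epsilon\}$; bounding $p_a^{(m)}(u)\le t_m\,p_X(u)$ by Lemma \ref{lem: batchewise_exp_density_leq_tm_density}, using $g(u)\le\tfrac32\epsilon$, and applying the margin condition (Assumption \ref{assum: Margin}) to $\int\mathbf{1}(0<g(u)\le\tfrac32\epsilon)\,p_X(u)\,du\lesssim\epsilon^{\alpha}$ yields the second target term $\lesssim\frac{1}{C_Z}\,t_m\,\epsilon^{1+\alpha}$. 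Adding the two regions proves the claim.

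I expect the main obstacle to be the small-gap region, where the local sample-complexity proxy $n_a^{(m)}(z)$ blows up as $g(z)\to0$, rendering the counting bound used on the large-gap region useless. The resolution is the Fubini swap combined with the fact that the admissible $z$-set has volume $O(\epsilon^d)$ that cancels the $\epsilon^{-d}$ in $\phi$; obtaining a density-free estimate here — and hence the clean $t_m\epsilon^{1+\alpha}$ — is the delicate point, since no upper bound on $p_X$ is available. A secondary technical task is supplying the exponent-$(d+1)$ analogue of Lemma \ref{lem: expectation_bound_margin_large} and verifying that the Lipschitz ratio estimates for $g$, $r_a$, and $\phi$ hold uniformly over each ball, paralleling \eqref{eq:ratio_bound} and \eqref{eq:g_ratio_bound}.
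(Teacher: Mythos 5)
Your proof is correct and follows the same overall skeleton as the paper's — split at the gap threshold $\epsilon$, the Lipschitz ratio bound $f^*(u)-f_a(u)\le\tfrac32\,(f^*(z)-f_a(z))$, the counting bound \eqref{eq: expectedsampledensity_and_nax} on the large-gap region, and the layer-cake/margin computation — but your handling of the small-gap region is genuinely different. The paper stays in the original order of integration and bounds the inner ball integral by the second arm of a minimum, $t_m\,p_X(z)\,r_a^d(z)$ (its step (b)), so that $r_a^d(z)\lesssim\epsilon^d$ cancels the $\epsilon^{-d}$ in $\phi$ and the margin condition finishes; note that this step implicitly uses $\int_{B(z,r)}p_X(u)\,du\lesssim p_X(z)\,r^d$, a local regularity of $p_X$ that is not among the stated assumptions (only the lower bound $p_X\ge\underline{c}$ is assumed). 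Your Fubini swap avoids exactly this: you need only $\phi\le 1/(C_Z\epsilon^d)$, the containment of the admissible $z$-set in a ball of radius $\epsilon/(2L\sqrt{C_1})$, Lemma \ref{lem: batchewise_exp_density_leq_tm_density}, and the margin condition, so your route is tighter to the paper's stated hypotheses and repairs that looseness, at the modest cost of an interchange of integrals (justified by Tonelli, as everything is nonnegative). On the large-gap side the two arguments coincide, and you are right to flag that Lemma \ref{lem: expectation_bound_margin_large} as stated covers only the exponent $d-1$: the paper silently invokes it for the dominating $(f^*(z)-f_a(z))^{-(d+1)}\ln t_{m-1}$ term as well, whereas you correctly note that the exponent-$(d+1)$ analogue (valid since $d+1>\alpha$) must be supplied — the same one-line modification of the layer-cake computation in either case.
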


\begin{proof}



Consider:
\begin{align}
\mathbb{E}&\left[\left. \int_{B(Z,r_a(Z))} p_a^{(m)}(u)(f^*(u) - f_a(u))\,du \,\right|\, \mathcal{F}_{t_{m-1}} \right]\nonumber\\
&\stackrel{(a)}{\leq} \frac{3}{2} \mathbb{E}\left[\left. \int_{B(Z,r_a(Z))} p_a^{(m)}(u)(f^*(z) - f_a(z))\,du \,\right|\, \mathcal{F}_{t_{m-1}} \right] \nonumber\\
&\stackrel{(b)}{\leq} \frac{3}{2} \mathbb{E}\left[\left. ((n_a^{(m)}(Z) + 1) \wedge (t_m p_Z(z) r_a^d(Z))) (f^*(Z) - f_a(Z)) \,\right|\, \mathcal{F}_{t_{m-1}} \right] \nonumber\\
&= \frac{3}{2} \int \left((n_a^{(m)}(z) + 1) \wedge (t_m p_Z(z) r_a^d(Z))\right)(f^*(z) - f_a(z))\, \frac{1}{\phi_Z[(f^*(z) - f_a(z)) \vee \epsilon]^d} dz \nonumber \\
&= \frac{3}{2} \int \left((n_a^{(m)}(z) + 1) \wedge (t_m p_Z(z) r_a^d(Z))\right)(f^*(z) - f_a(z))\, \frac{1}{\phi_Z[(f^*(z) - f_a(z))]^d} \nonumber\\
& \qquad \qquad \qquad \qquad \times 1(f^*(z) - f_a(z)> \epsilon) dz \nonumber\\
& \quad  + \frac{3}{2} \int \left((n_a^{(m)}(z) + 1) \wedge (t_m p_Z(z) r_a^d(Z))\right)(f^*(z) - f_a(z))\, \frac{1}{\phi_Z\epsilon^d} 1(f^*(z) - f_a(z)\leq \epsilon) dz, \label{eq: lem9_1}
\end{align}
For (a):
\begin{align}
f^*(u) - f_a(u) &\leq f^*(z) - f_a(z) + 2L r_a(z) \notag \\
&\leq f^*(z) - f_a(z) + \frac{1}{\sqrt{C_1}}(f^*(z) - f_a(z)) \notag \\
&\leq \frac{3}{2}(f^*(z) - f_a(z)).
\label{eq:sm_bound}
\end{align}
We get (b) from Lemma \ref{lem: batchewise_exp_density_leq_tm_density} and \eqref{eq: n^m_leq_na}.
In \eqref{eq: lem9_1}, we split the domain based on whether $(f^*(z) - f_a(z))$ is large or small, and use the margin assumption (Assumption \ref{assum: Margin}) for the latter. Note that, If $f^*(Z) - f_a(Z) > \epsilon$, then $n_a^{(m)}(Z) = (\log t_{m-1}) (f^*(Z) - f_a(Z))^{-2}$ is smaller, otherwise the bias dominates.
\begin{align*}
\eqref{eq: lem9_1} 
&=\frac{3}{2C_Z}\left( \int \left( \frac{C_1 \ln {t_{m-1}}}{(f^*(z) - f_a(z))} + f^*(z) - f_a(z) \right) \frac{1}{(f^*(z) - f_a(z))^d} 1(f^*(z) - f_a(z) > \epsilon) dz\right.\\
&\quad \quad + \left.\int t_m p_Z(z) r_a^d(Z) (f^*(z) - f_a(z)) \frac{1}{\epsilon^d} 1(f^*(z) - f_a(z) \leq \epsilon) dz\right) \\
&\lesssim \frac{1}{C_Z} \left(\mathbb{E}\left[ (f^*(Z) - f_a(Z))^{-(d+1)} 1(f^*(Z) - f_a(Z) > \epsilon) \right] \ln{t_{m-1}} \right.\\
&\quad \quad + \left.\frac{t_{m}}{\epsilon^d} \mathbb{E}\left[ (f^*(Z) - f_a(Z))^{d+1} 1(f^*(Z) - f_a(Z) \leq \epsilon) \right]\right) \\
&\stackrel{(c)}{\lesssim} \frac{1}{C_Z} \left( \epsilon^{\alpha - d - 1} \ln{t_{m-1}} + t_{m} \epsilon^{1+\alpha} \right),
\end{align*}
where the first term in (c) comes from the dominating term in Lemma \ref{lem: expectation_bound_margin_large} and for the second term we use the Margin assumption as follows:
\begin{align}
\int_{\mathcal{X}} (f^*(z) - f_a(z)) 1(f^*(z) - f_a(z) < \epsilon) dz
&\leq \frac{1}{\underline{c}} \mathbb{E}\left[ (f^*(X) - f_a(X)) 1(f^*(X) - f_a(X) < \epsilon) \right] \notag \\
&\leq \frac{L_0}{\underline{c}} \epsilon^{\alpha + 1}.
\end{align}
This concludes the proof.
\end{proof}

\subsection{Additional Experiments in Higher Dimensions}
\label{sec: additional_exp}
We extend the numerical experiments from Section~\ref{sec: simulated_data} to evaluate algorithm performance in higher-dimensional contexts. Specifically, we consider $d \in \{3, 4, 5\}$ while keeping the underlying data-generating mechanisms for both experimental settings unchanged. As expected, the performance of both \texttt{BaSEDB} and \texttt{BaNk-UCB} deteriorates with increasing dimension, consistent with the theoretical prediction from Theorem~\ref{thm: upperbound_BakNNUCB} and Theorem~\ref{thm: lowerbd_kNNUCBbatched} that regret decays more slowly when $d$ is large due to the corresponding decrease in the parameter $\gamma$.

Despite the increased difficulty, \texttt{BaNk-UCB} continues to outperform \texttt{BaSEDB} across all settings, including the more challenging Setting 1. These results highlight the robustness of \texttt{BaNk-UCB} in moderate to high-dimensional settings, where the benefits of adapting to local geometry become even more pronounced.

\begin{figure}[h!]
    \centering
    \includegraphics[width=0.31\linewidth]{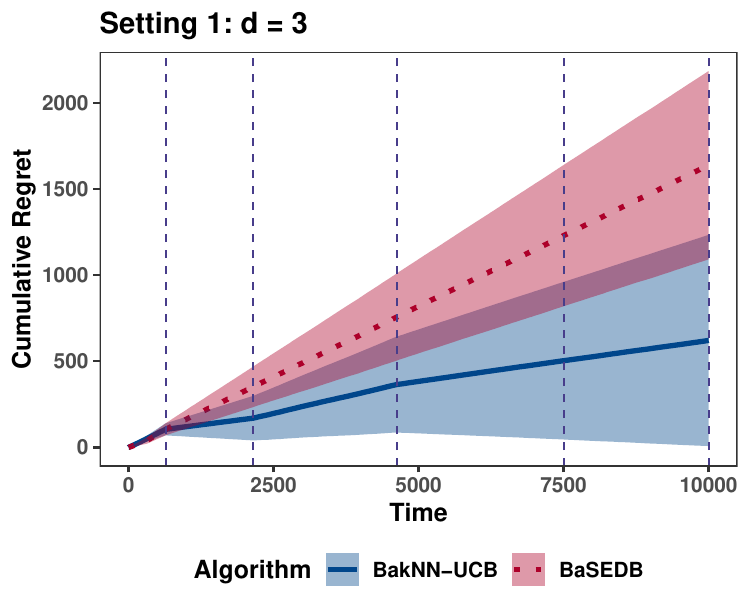}
    \includegraphics[width=0.31\linewidth]{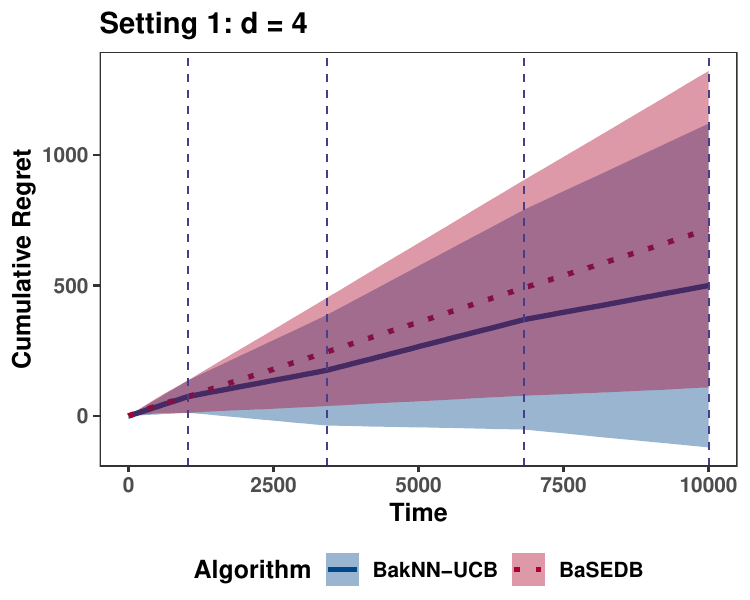}
    \includegraphics[width=0.31\linewidth]{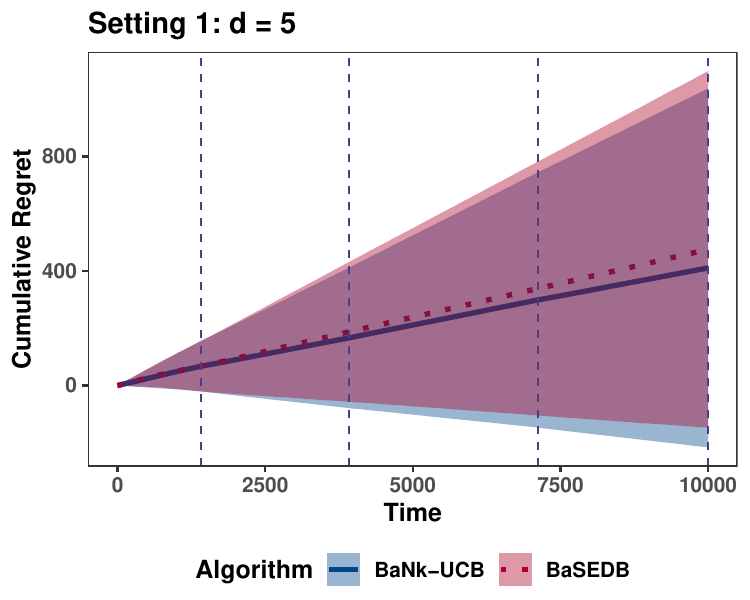}\\
    \includegraphics[width=0.31\linewidth]{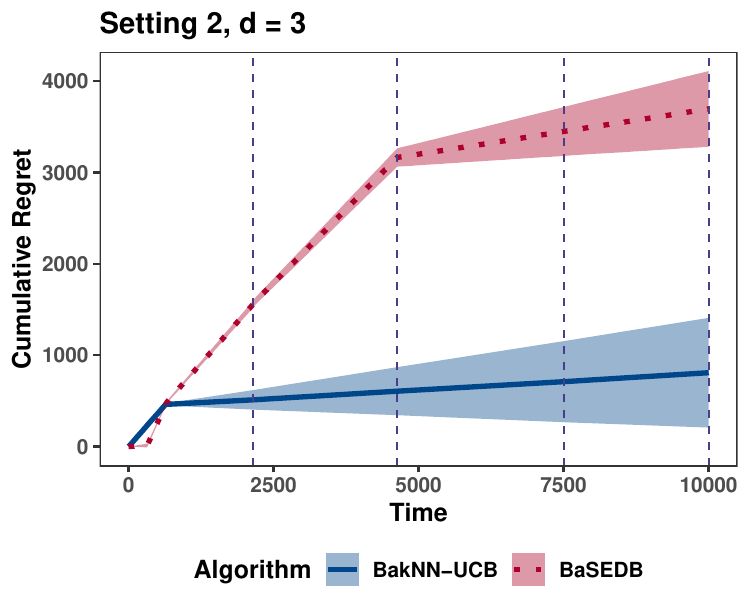}
    \includegraphics[width=0.31\linewidth]{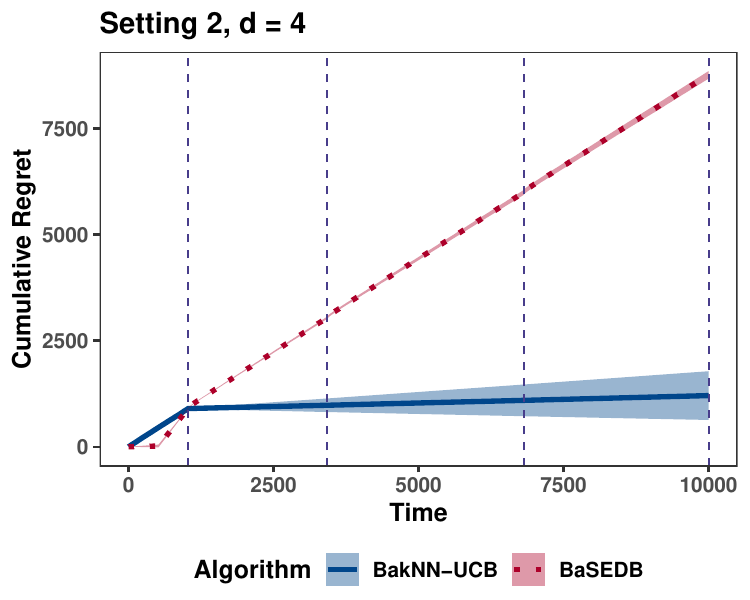}
    \includegraphics[width=0.31\linewidth]{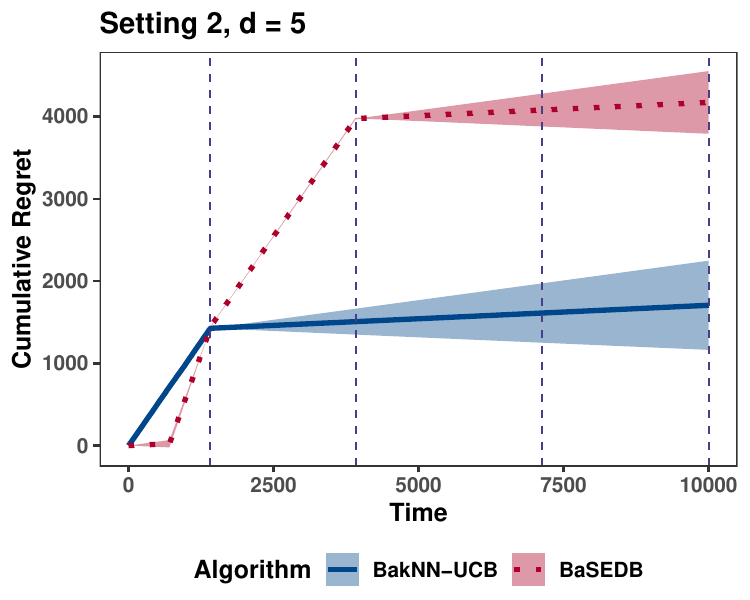}
    \caption{Average cumulative regret over 30 runs for \texttt{BaSEDB} and \texttt{BaNk-UCB} under Settings 1 and 2 with $d \in \{3, 4, 5\}$. Vertical dashed lines denote batch boundaries.}
    \label{fig:highdim-results}
\end{figure}

\end{document}